\newtheorem{theorem}{Theorem}
\newtheorem{lemma}{Lemma}
\journal{KNOWLEDGE-BASED SYSTEMS}
\begin{document}
\begin{sloppypar}
\begin{frontmatter}
\title{Semi-supervised representation learning via dual autoencoders for domain adaptation }
\author[1,2]{Shuai Yang}
\ead{yangs@mail.hfut.edu.cn}
\author[1,2]{Hao Wang}
\ead{jsjxwangh@hfut.edu.cn}
\author[1,2]{Yuhong Zhang \corref{cor1}}
\cortext[cor1]{Corresponding author: Yuhong Zhang}
\ead{zhangyh@hfut.edu.cn}
\author[1,2]{Peipei Li}
\ead{peipeili@hfut.edu.cn}
\author[3]{Yi Zhu}
\ead{z8d1177@126.com}
\author[1,2,4]{Xuegang Hu}
\ead{jsjxhuxg@hfut.edu.cn}

\address[1]{ Key Laboratory of Knowledge Engineering with Big Data (Hefei University of Technology), Ministry of Education}
\address[2]{ School of Computer Science and Information Engineering, Hefei University of Technology, Hefei 230009, China}
\address[3]{ School of Information Engineering, Yangzhou University, Jiangsu 225009, China}
\address[4]{ Anhui Province Key Laboratory of Industry Safety and Emergency Technology, Hefei 230009, China}

\begin{abstract}
Domain adaptation aims to exploit the knowledge in source domain to promote the learning tasks in target domain, which plays a critical role in real-world applications. Recently, lots of deep learning approaches based on autoencoders have achieved  a significance performance in domain adaptation. However,  most existing methods focus on minimizing the distribution divergence by  putting the source  and target data together to learn global feature representations, while they do not consider the local relationship between instances in the same category from different domains. To address this problem, we propose a novel Semi-Supervised Representation Learning framework via Dual Autoencoders for domain adaptation, named SSRLDA. More specifically, we extract richer  feature representations by learning the global and local feature representations simultaneously using two novel autoencoders, which are referred to as  marginalized denoising autoencoder with adaptation distribution (MDA$_{ad}$) and  multi-class marginalized denoising autoencoder (MMDA) respectively. Meanwhile, we  make full use of  label information to optimize  feature representations. Experimental results show that  our proposed approach  outperforms several state-of-the-art baseline methods.
\end{abstract}
\begin{keyword}
domain adaptation;  dual autoencoders; representation learning;  semi-supervised.
\end{keyword}
\end{frontmatter}

\section{Introduction}
Traditional  classification methods built on the assumption that the training data  and testing  data are identically distributed, and thus the classifier trained from training data can be used to classify the testing data directly. However, in many real-world applications, the distributions of training and testing data are usually different but related \citep{2,WeiKG17,PouyanfarSYTTRS19,YangLZ17,JiangGLLCH19}. To address this problem,  a number of domain adaptation approaches \citep{PanY10,SaghaCS17,TanSKZYL18,ZellingerMGLNS19} have been  proposed to learn the domain invariant feature representations on which the  divergence of the domain   distribution was decreased.
\par In  recent years, deep learning has been  widely used in natural language processing and achieved a satisfying classification performance.  Popular deep learning models included Autoencoder \citep{16,17,MengD0018,WeiFeature}, Convolutional Neural Network (CNN) \citep{31,EyiokurYE18,GaoSX18}, Recurrent Neural Network (RNN) \citep{JaechHO16,DingYJ17,DengZS18} and Generative Adversarial Network (GAN) \citep{TzengHSD17,LongC0J18,CaoMLW18,ShenQZY18}. Methods based on autoencoders have been proven to be able to learn domain generic concepts and  be beneficial  to cross-domain learning tasks.
\par A number of existing domain adaptation approaches based on autoencoders  have been successfully applied in learning powerful feature representations. One typical unsupervised feature representation learning method is stacked denoising autoencoder (SDA) \cite{16}, which learned  deep feature  representations by stacking multi-layer denoising autoencoder. However, SDA has two  shortcomings: high computational cost and the lack of scalability to high-dimensional features. To address these problems,  Chen et al. \cite{17} put forward marginalized stacked denoising autoencoder (mSDA). It obtained higher-level feature representations by marginalization corrupting the original input data, and it is a more efficient method than SDA. Nevertheless, mSDA has the following two  issues: one is how to determine the noise probability, and the other is that some features may be more important than the others in domain adaptation. Based on the work of mSDA, Wei et al. \cite{WeiFeature}  proposed the feature analysis of marginalized stacked denoising autoenconder (DTFC), which extracted  effective feature representations by corrupting the raw input data with multinomial dropout noise. Futhermore,  in order to capture much more nonlinear relationship in the data, deep nonlinear feature coding (DNFC) \cite{WeiKG16} minimized the marginal distribution between source  and target domains and introduced kernelization for nonlinear coding. To avoid over-fitting on source training data,  Csurka et al. \cite{Csurka17} put forward an extended framework for domain adaptation, which learned better feature representations by minimizing the domain prediction loss and the maximum mean discrepancy between source domain and target domain. In addition, Yang et al. \cite{YangZZLH19} proposed a  representation learning framework based on serial autoencoders (SEAE), which  combined two different types of autoencoders in series to achieve richer feature representations. Additionally,  Jiang et al. \cite{JiangGLLCH19} proposed a method of stacked robust adaptively regularized auto-regressions, which adopted $\ell_{2,1}$-norm  as a measure of reconstruction error to reduce the effect of outliers. Based on the work of Jiang \cite{JiangGLLCH19}, $\ell_{2,1}$-norm stacked adaptation regularization autoencoder (SRAAR) \cite{YangZZDH19} was presented to learn more complicated feature representations by preserving the local geometry structure of data in source and target domains.

\par Though previous deep learning approaches based on autoencoders are able to reduce the distribution discrepancy between two  domains with  the learned  feature representations, there are two major drawbacks as follows: firstly, most of approaches only focus on learning global feature representations by putting source and target data together for training, and pay  little attention to  the local relationship between instances in the same category from two domains. Thus, they probably ignore some discriminative features  that are beneficial to cross-domain tasks.  For example, some discriminative features belonging to one category  will be disturbed by other features belonging to other categories if only the global feature representations are  considered. Secondly, the label information of source domain can be well used to further reduce the condition distribution divergence between two domains, however, most of approaches are unsupervised models and they can not use the label information of source domain to improve the quality of feature representations, which are easy to result in an unsatisfying performance.


To address the above issues, we propose a novel Semi-Supervised  Representation Learning framework via Dual Autoencoders (SSRLDA) for domain adaptation. More specifically, marginalized denoising autoencoder with adaptation distributions (MDA$_{ad}$) is used to obtain the global feature representations  by minimizing the marginal and condition distributions between two domains simultaneously. Second, we  divide the source and target data into different local subsets, which  depend on the category of domains by utilizing the label knowledge in  source domain and the pseudo label knowledge in  target domain. Then, multi-class marginalized denoising autoencoder (MMDA) is applied in learning feature representations of the local subsets. Finally, we combine the global and local feature representations to form the new feature space, on which the distribution divergence is decreased. Then, we conduct cross-domain learning tasks on the new feature space.
\par  Our main contributions of SSRLDA are summarized below:
\begin{itemize}
\item Different from the existing works, which focus on learning the global feature representations for  both source and target domains, we provide a novel viewpoint for solving domain adaptation problems by taking both global and local feature representations into consideration. In this way, the proposed algorithm can extract richer feature representations with different characteristics of the data and further improve the classification accuracy.
\item Without the label information of target domain, we design a new semi-supervised feature representation learning framework that makes full use of the pseudo label knowledge of target domain to optimize feature representations. More specifically, the proposed algorithm enhances the quality of feature representations by minimizing the conditional distribution between source and target domains with the help of label information of source domain and the pseudo label information of target domain.
\item We design two novel autoencoder models to learn  robust feature representations,  which are referred to as marginalized denoising autoencoder with adaptation distributions (MDA$_{ad}$) and  multi-class marginalized denoising autoencoder (MMDA) respectively. Both of them can obtain better feature representations. In addition, both of them can be calculated in a closed form with less time cost.
\end{itemize}

\section{Related Work}
Domain adaptation has received lots of attention from researches in  recent years. How to reduce the distribution divergence between source domain and target domain is the core idea of domain adaptation.  A number of  feature-based approaches have been presented to address this problem.  Those approaches aimed to extract common feature representations shared by source and target domains, which are  beneficial to  reducing the distribution divergence between two domains. Those approaches  can be categorized into shallow learning methods and deep learning methods regarding the adopted techniques.

\subsection{Shallow Learning Methods}
One typical shallow learning method is  a structural correspondence learning (SCL) \cite{BlitzerMP06}, which  used the pivot features between source and target domains to extract the  potential relationships  between non-pivot and pivot features.  Pan et al. \cite{PanTKY11} put forward a  subspace learning method by utilizing maximum mean discrepancy (MMD) to learn a subspace on which  the domain divergence between two domains is reduced. Li et al. \cite{LiWL17} designed a powerful and  efficient algorithm, which  aimed at seeking a discriminate subspace shared by source and target domains.  CORAL \cite{SunFS16} decreased the distance between source and target domains by  aligning the second-order statistics of two domains distributions. It is an effective and simple feature learning method.  Sharma et al. \cite{BhattacharyyaDS18}  designed a  method of identifying transferable information across domains, it used the word with the same polarity  between two domains for sentiment classification. Di et al. \cite{DiPSC18} proposed  a  transfer learning framework named transfer learning via feature isomorphism discovery (TLFid), which used extra feature labeling information and  the feature isomorphism across domains  to improve the performance of knowledge transfer. Luo et al. \cite{LuoWLT19} proposed a general heterogeneous transfer distance metric learning framework, which utilized the knowledge fragments in  source domain to aid to the metric learning in  target domain. Chen et al. \cite{ChenSLYW19} presented an unsupervised domain adaptation algorithm called domain space transfer extreme learning machine (DST-ELM), DST-ELM  incorporated MMD  into the extreme learning machine (ELM) framework to preserve  the discriminative information of target domain. He et al. \cite{HeTL19} designed a  method called multi-view transfer learning with privileged learning framework, which performed better in both multi-view and cross-domain learning tasks. Liu et al. \cite{LiuLZ18} designed a  heterogeneous unsupervised domain adaptation model based on n-dimensional fuzzy geometry and fuzzy equivalence relations (F-HeUDA). F-HeUDA ont only  effectively transferred knowledge from large datasets to small datasets, but also allowed  that source and target domains have different numbers of instances.
\subsection{Deep Learning Methods}
Deep learning methods  are considered to be an effective way to learn robust feature representations for domain adaptation and have attracted more and more attention. For instances, Glorot et al. \cite{16} proposed stacked denoising autoencoder (SDA) to learn  deep  feature representations.   Based the work of SDA, Chen et al.  \cite{17} put foward marginalized stacked denoising autoencoder (mSDA), which adopted the liner denoiser to learn transferable feature representations and did not need the stochastic gradient descent to learn parameters. Wei et al. \cite{WeiKG16} designed a deep nonlinear feature coding (DNFC) framework, which incorporated kernelization  and MMD into mSDA to learn nonlinear deep feature representations. Feature analysis of marginalized stacked denoising autoenconder (DTFC) \cite{WeiFeature} obtained deep feature representations by corrupting the original input data with  multinomial dropout noise.  Jiang et al. \cite{33} adopted $\ell_{2,1}$-norm to measure the reconstruction error to reduce the impact of outliers.  Ganin et al. \cite{19}  presented domain-adversarial training of neural networks (DANN), which  learned domain invariant features by a minimax game  between the domain classifier and the feature extractor.  Based on the work of  Ganin \citep{19},  Clinchant et al. \cite{20} proposed an unsupervised regularization transfer learning method  to  avoid overfitting. Zhu et al. \cite{Zhu18} presented  a semi-supervised method for transfer learning called stacked reconstruction independent component analysis (SRICA), which minimized the KL-Divergence between two domains and adopted the softmax regression to encode label information of the data in source domain. Shen et al. \cite{ShenQZY18} presented  wasserstein distance guided representations learning (WDGRL),  which learned domain invariant feature representations in an adversarial manner by minimizing the wasserstein distance between two domains. Long et al. \cite{LongC0J18} designed an  approach of transfer learning with multimodal distributions,  named  conditional domain adversarial networks (CDANs).
\section{Preliminaries}
In this section, some notations and  preliminary knowledge used in our proposed framework are introduced.
\subsection{Notations }
 Given a labeled source domain $X_{S}=\{x_{i}^{s}\}_{i=1}^{n_{s}}$ with  label knowledge $Y_{S}=\{y_{i}^{s}\}_{i=1}^{n_{s}}$  and an unlabeled target domain $X_{T}=\{x_{j}^{t}\}_{i=1}^{n_{t}}$.  $n_{s}$ and $n_{t}$ are numbers of source and target instances respectively.  $x_{i}^{s} \in \Re^{d}$ and $x_{j}^{t} \in \Re^{d}$ are the $i$-th, $j$-th instances in  source domain and target domain respectively. $d$ is the dimension of the feature space, and  $y_{i}^{s}$ represents the label of instances in  source domain $X_{S}$.

\subsection{Marginalized Denoising Autoencoder }
Marginalized denoising autoencoder (MDA) \cite{17} aimed at learning domain invariant feature representations by reconstructing the original data from the marginalization corrupted ones. Given an input data $X\in  \Re^{(n_{s}+n_{t})\times d}$ and  $X$ is corrupted by random feature removal  with the  probability $p$. The $i$-th corrupted version of $X$ is defined as $\widetilde{X_{i}}$. MDA learns robust feature representations by minimizing the objective function, as shown in Eq.(\ref{LossMDA}):
\begin{equation}
\mathcal{L}(W)=\min\limits_{W}{\frac{1}{K}\sum_{i=1}^{K}\|X-\widetilde{X_{i}}W\|_{F}^{2}} + \lambda\|W\|_{F}^{2}
\label{LossMDA}
\end{equation}
where $\lambda$ is hyper-parameter. The objective function has a unique and optimal solution \cite{17}, and the mapping $W\in  \Re^{d\times d}$ can be expressed in a closed form  as $W= (Q+\lambda I_{d})^{-1}P$ with $Q=\frac{1}{K}\sum_{i=1}^{K}\widetilde{X_{i}}^{T}\widetilde{X_{i}}$ and $P=\frac{1}{K}\sum_{i=1}^{K}\widetilde{X_{i}}X^{T}$.  Chen et al.  \cite{17} showed the limit case ($K\to\infty$) and derived  the expectations of $Q$ and $P$, and $W$ can be computed as $W= (\mathbb{E}[Q]+\lambda I_{d})^{-1}\mathbb{E}[P]$ ($I_{d}$ is a diagonal matrix).

The diagonal entries of the matrix $\widetilde{X_{i}}^{T}\widetilde{X_{i}}$  hold with the probability $1-p$, and $\mathbb{E}[P]$ and $\mathbb{E}[Q]$ is shown in Eq.(\ref{eqmDAP}) and Eq.(\ref{eqmDAQ}) respectively:
\begin{equation}
  \mathbb{E}[P] =(1-p)U
\label{eqmDAP}
\end{equation}

\begin{equation}
\mathbb{E}[Q_{ij}] = \left\{
             \begin{array}{lcl}
             { U_{ij}(1-p)^{2}}  &, if \quad i\neq j \\
             {U_{ij}(1-p)}  &,if\quad  i = j
             \end{array}
        \right.
\label{eqmDAQ}
\end{equation}
where $U = X^{T}X$ is the covariance matrix of the uncorrupted data $X$. $U_{ij}$ and $Q_{ij}$ represent the $i$-th row, $j$-th column element in matrix $U$ and $Q$ respectively.

\subsection{Maximum Mean Discrepancy}
\par Maximum mean discrepancy (MMD) is widely used to measure the distribution divergence between source and target domains since it avoids the density estimation.  Most previous works  use MMD as a regularizer for  domain adaptation learning tasks \cite{Csurka17}. Given two sets of samples $X_{S}$ and $X_{T}$, the empirical estimate of MMD is defined as Eq.(\ref{eqOMMD}):
\begin{equation}
MMD^{2}(X_{S},X_{T})=\bigg\lVert\frac{1}{n_{s}} \sum_{i=1}^{n_{s}}\Phi(x_{i}^{s})-\frac{1}{n_{t}} \sum_{j=1}^{n_{t}}\Phi(x_{j}^{t})\bigg\lVert_{\mathcal{H}}^{2}
\label{eqOMMD}
\end{equation}
where $\Phi(x)$ is a feature map kernel function and $\mathcal{H}$ is the reproducing kernel Hilbert space (RKHS).

\section{Proposed Algorithm}
In this section, we first give some basic concepts  used in this paper, and then give the details of our SSRLDA algorithm.
\subsection{Problem Formalization }
 Given a labeled source domain $X_{S}=\{x_{i}^{s}\}_{i=1}^{n_{s}}$ and an unlabeled target domain $X_{T}=\{x_{j}^{t}\}_{i=1}^{n_{t}}$, the goal of our method is to learn effective and transferable feature representations, where a classifier can be trained to predict the labels $Y_{T}=\{y_{i}^{t}\}_{i=1}^{n_{t}}$ of instances in target domain $X_{T}$.

\subsection{Dual  Feature Representation Learning Framework (SSRLDA)}
The proposed semi-supervised representation learning  framework based on autoencoders (SSRLDA) is capable of learning more powerful and richer feature representations.  Our SSRLDA  combined dual feature representations learned from two types of  autoencoders which are referred to as marginalized denoising autoencoder with adaptation distributions (MDA$_{ad}$) and  multi-class  marginalized denoising autoencoder (MMDA) respectively. MDA$_{ad}$ is adopted to learn the global feature representations of the source and target domains by minimizing the marginal and condition distributions of two domains simultaneously, while MMDA is applied in extracting  the local feature representations. Additionally, we introduce label information to optimize feature representations. The global and local feature representations are combined to form dual feature representations, on which the distance between two domains is reduced. Fig.\ref{SRAEFframe} shows the framework of our proposed approach. In Fig.\ref{SRAEFframe}, $|C|$ represents the number of class in both domains, $l$ represents the number of staked layers, $X_{S}^{(c)}$ and $X_{T}^{(c)}$ $(c=1,\ldots,|C|)$ are  instances belonging to the $c$-th class in source domain and target domain respectively, $G(l)$ represents the $l$-th layer feature representations of data $G$ ($G$ can be $  X_{S},X_{T},X_{S}^{(c)},X_{T}^{(c)})$. In the following, we will give the details of MDA$_{ad}$ and MMDA.

\begin{figure*}
  \centering
  \includegraphics[width=13cm]{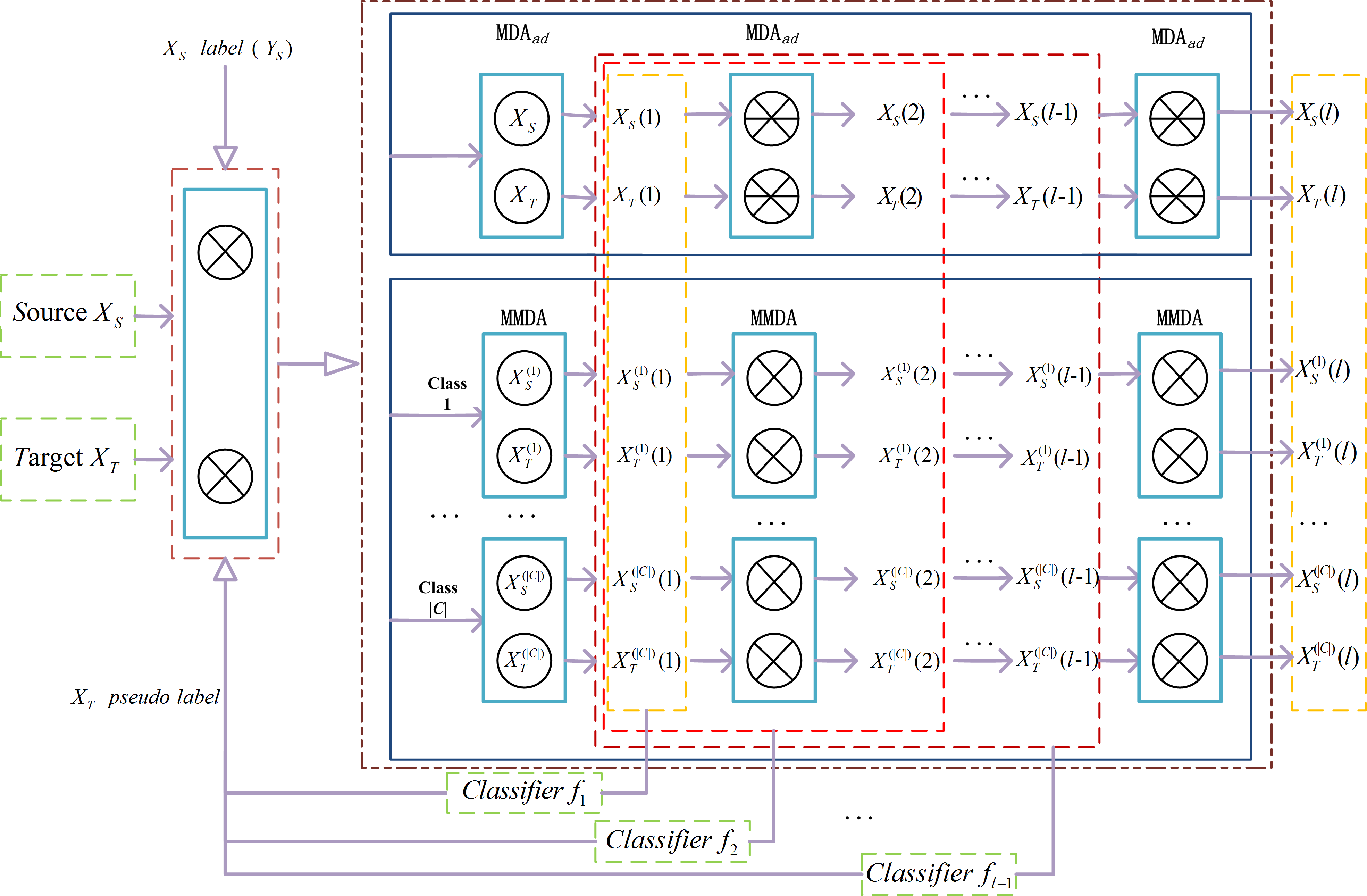}
  \caption{The whole framework of our proposed SSRLDA}
  \label{SRAEFframe}
\end{figure*}

\subsection{Marginalized Denoising Autoencoder with Adaptation Distributions (MDA$_{ad}$)}
\par Marginalized denoising autoencoder with adaptation distributions (MDA$_{ad}$) is used to capture global feature information in  source and target domains. MDA$_{ad}$ leverages both source data $X_{S}$ and target data $X_{T}$   to learn transferable feature representations. Generally, the distributions between source domain and target domain are different, i.e., $\mathcal{P}_{X_{S}}\neq$  $\mathcal{P}_{X_{T}}$  or  $\mathcal{Q}_{X_{S}}(Y_{S}|X_{S})\neq$ $\mathcal{Q}_{X_{T}}(Y_{T}|X_{T})$, where $\mathcal{P}_{X_{S}}$ and $\mathcal{P}_{X_{T}}$ are the marginal probability distributions of the source and target domains respectively, and $\mathcal{Q}_{X_{S}}(Y_{S}|X_{S})$ and $\mathcal{Q}_{X_{T}}(Y_{T}|X_{T})$ are the conditional distributions.  Based on the work of MDA, we minimize the marginal distributions and conditional distributions between two domains simultaneously in learning the global feature representations. More specifically, we learn the feature mapping matrix $W$ to reconstruct source domain and target domain. The input data is corrupted  with the noise probability $p$. The objective function is defined as Eq.(\ref{ProveeqmDAad}):
\begin{equation}
\mathcal{L}_{MDA_{ad}}(W)=\min\limits_{W}{\frac{1}{K}\sum_{i=1}^{K}\|X-\widetilde{X_{i}}W\|_{F}^{2}} + \lambda\|W\|_{F}^{2} + \beta (MMD^{2}_{mar}(\mathcal{P}_{\ddot{X}_{S}},\mathcal{P}_{\ddot{X}_{T}})+MMD^{2}_{con}(\mathcal{Q}_{\ddot{X}_{S}},\mathcal{Q}_{\ddot{X}_{T}}))
\label{ProveeqmDAad}
\end{equation}
where $\ddot{X}_{S}$ and $\ddot{X}_{T}$ are the induced data on a new feature space, $\lambda$ and $\beta$ are hyper-parameters and $MMD^{2}_{mar}(\mathcal{P}_{\ddot{X}_{S}},\mathcal{P}_{\ddot{X}_{T}})$ is defined as Eq.(\ref{eqMMDmar1}):

\begin{equation}
MMD^{2}_{mar}(\mathcal{P}_{\ddot{X}_{S}},\mathcal{P}_{\ddot{X}_{T}})=\bigg\|\frac{1}{n_{s}} \sum_{i=1}^{n_{s}}\widetilde{x_{i}^{s}}W-\frac{1}{n_{t}} \sum_{j=1}^{n_{t}}\widetilde{x_{j}^{t}}W\bigg\|_{F}^{2}=tr(W^{T}\widetilde{X}^{T}M_{0}\widetilde{X}W)
\label{eqMMDmar1}
\end{equation}
where $\widetilde{X} = \frac{1}{K}\sum_{i=1}^{K}\widetilde{X_{i}}$ and $M_{0}$ is computed as Eq.(\ref{eqm0}):
\begin{equation}
(M_{0})_{ij} = \left\{
             \begin{array}{ll}
             { \frac{1}{n_{s}n_{s}}} & if \quad x_{i},x_{j} \in X_{S}\\
             { \frac{1}{n_{t}n_{t}}} & if \quad x_{i},x_{j} \in X_{T}\\
             {\frac{-1}{n_{s}n_{t}}} & if \quad x_{i}\in X_{S},x_{j}\in X_{T}\\
             {\frac{-1}{n_{t}n_{s}}} & if \quad x_{i}\in X_{T},x_{j}\in X_{S}\\
             {0,} & otherwise
             \end{array}
        \right.
\label{eqm0}
\end{equation}
\par In order to minimize the condition distributions  between two  domains, the label knowledge of instances in source domain and the pseudo label knowledge of instances in target domain are leveraged.  Because the label knowledge of instances in  target domain are unknown, thus, we use support vector machine (SVM) to train a classifier in source domain, and then use this classifier to obtain the pseudo label of instances in  target domain. In our method,  MMD with the linear kernel is used. The corresponding loss is shown in Eq.(\ref{eqMDAadMMMDcon}):
\begin{equation}
MMD^{2}_{con}(\mathcal{Q}_{\ddot{X}_{S}},\mathcal{Q}_{\ddot{X}_{T}})=\sum_{c=1}^{|C|}\bigg\|\frac{1}{n_{s}^{(c)}} \sum_{\widetilde{x_{i}^{s}}\in X_{S}^{(c)}}\widetilde{x_{i}^{s}}W-\frac{1}{n_{t}^{(c)}} \sum_{\widetilde{x_{j}^{t}} \in X_{T}^{(c)}}\widetilde{x_{j}^{t}}W\bigg\|_{F}^{2}= \sum_{c=1}^{|C|}tr(W^{T}\widetilde{X}^{T}M_{c}\widetilde{X}W)
\label{eqMDAadMMMDcon}
\end{equation}
where $|C|$ is the number of classes in source and target domains, and $M_{c}$ is computed as Eq.(\ref{eqmc}):
\begin{equation}
(M_{c})_{ij} = \left\{
             \begin{array}{ll}
             { \frac{1}{n_{s}^{(c)}n_{s}^{(c)}},} & if \quad x_{i},x_{j} \in X_{S}^{(c)} \\
             { \frac{1}{n_{t}^{(c)}n_{t}^{(c)}},} & if \quad x_{i},x_{j} \in X_{T}^{(c)}\\
             {\frac{-1}{n_{s}^{(c)}n_{t}^{(c)}},} & if \quad x_{i}\in X_{S}^{(c)},x_{j} \in X_{T}^{(c)}\\
             {\frac{-1}{n_{t}^{(c)}n_{s}^{(c)}},} & if \quad x_{i}\in X_{T}^{(c)},x_{j} \in X_{S}^{(c)}\\
             {0,} & otherwise
             \end{array}
        \right.
\label{eqmc}
\end{equation}
where $n_{s}^{(c)}$, $n_{t}^{(c)}$ represent the numbers of the $c$-th class instances of source domain and target domain respectively, $X_{S}^{(c)}$, $X_{T}^{(c)}$ are the $c$-th class instances of source domain and target domain respectively.

Then, the  objective function of MDA$_{ad}$ can be formulated as Eq.(\ref{ProveeqARmDAad2}):
\begin{equation}
\mathcal{L}_{MDA_{ad}}(W)=\min\limits_{W}{\frac{1}{K}\sum_{i=1}^{K}\|X-\widetilde{X_{i}}W\|_{F}^{2}} + \lambda\|W\|_{F}^{2} + \beta (tr(W^{T}\widetilde{X}^{T}M_{0}\widetilde{X}W)+\sum_{c=1}^{|C|}tr(W^{T}\widetilde{X}^{T}M_{c}\widetilde{X}W))
\label{ProveeqARmDAad2}
\end{equation}

\textbf{Solution to MDA$_{ad}$:} By setting the derivative of Eq.(\ref{ProveeqARmDAad2}) w.r.t $W$ to zero, the above equation becomes
 \begin{equation}
 \begin{aligned}
\frac {\partial \mathcal{L}_{MDA_{ad}}}{W }= -2\widetilde{X}^{T}X +2\widetilde{X}^{T}\widetilde{X}W +2\lambda W +2\beta \widetilde{X}^{T}(\sum_{c=0}^{|C|}M_{c})\widetilde{X}W=0
\end{aligned}
\label{ProveeqARmDAad3}
\end{equation}

\begin{equation}
 \begin{aligned}
W &= (\widetilde{X}^{T}\widetilde{X}+\lambda I_{d}+\beta\widetilde{X}^{T}(\sum_{c=0}^{|C|}M_{c})\widetilde{X})^{-1}\widetilde{X}^{T}X \\&
=(\mathbb{E}[Q] + \lambda I_{d} + \beta\widetilde{X}^{T}(\sum_{c=0}^{|C|}M_{c})\widetilde{X})^{-1}\mathbb{E}[P]\\&
=(\mathbb{E}[Q] + \lambda I_{d}+ \beta \mathbb{E}[Q_{2}])^{-1}\mathbb{E}[P]
\end{aligned}
\label{ProveeqARmDAad4}
\end{equation}
where $\mathbb{E}[Q_{2}] = \widetilde{X}^{T}(\sum_{c=0}^{|C|}M_{c})\widetilde{X}$.

\begin{equation}
\mathbb{E}[{Q_2}_{ij}] = \left\{
             \begin{array}{lcl}
             { {U_{2}}_{ij}(1-p)^{2}}  &, if \quad i\neq j \\
             {{U_{2}}_{ij}(1-p)}  &,if\quad  i = j
             \end{array}
        \right.
\label{eqmDAQl}
\end{equation}
where $U_{2} = X^{T}(\sum_{c=0}^{|C|}M_{c})X$.
\par In order to capture much more nonlinear relationships in the data, hyperbolic tangent function $tanh(\cdot)$ is used, and  we obtain the global feature representations namely $tanh(\widetilde{X}W)$.
\subsection{Multi-class Marginalized Denoising Autoencoder (MMDA)}
\par It is well known that instances belonging to the same category in two domains have strong similarities. In order to promote cross-domain learning tasks, it is not enough to capture the global information between source and target domains, but also need to learn the invariant feature representations between local subsets of two domains. Therefore, multi-class marginalized denoising autoencoder (MMDA) is proposed to obtain feature representations of the local subsets. More specifically, we first divide the source and target data into  different local subsets according to the category of domains by leveraging the label information of the source domain and the pseudo label information of the target domain.  Then, we learn richer local feature representations of different subsets with MDA. In order to optimize the feature representations, based on the work of MDA, we present multi-class marginalized denoising autoencoder (MMDA) which minimizes the marginal distributions in learning effective local feature representations. The local input data $X^{(1)},X^{(2)}, \ldots, X^{(|C|)}$ are corrupted  with the noise probability $p$. The objective function of MMDA is formalized as Eq.(\ref{ProveeqMMDALOSS}):
\begin{equation}
\mathcal{L}_{MMDA}=\min\limits_{}{\sum_{c=1}^{|C|}(\frac{1}{K}\sum_{i=1}^{K}\|X^{(c)}-\widetilde{X_{i}^{(c)}}W^{(c)}\|_{F}^{2}} + \lambda\|W^{(c)}\|_{F}^{2} + \beta MMD^{2}_{mar}(\mathcal{P}_{\ddot{X}_{S}^{(c)}},\mathcal{P}_{\ddot{X}_{T}^{(c)}}))
\label{ProveeqMMDALOSS}
\end{equation}
where $X^{(c)}=[X_{S}^{(c)},X_{T}^{(c)}]$ and $\widetilde{X_{i}^{(c)}}$ is the $i$-th corrupted version of $X^{(c)}$, and $\ddot{X}_{S}^{(c)}$ and $\ddot{X}_{T}^{(c)}$ are the generated data on a new feature space.
\par \textbf{Solution to MMDA:} By setting the derivative of Eq.(\ref{ProveeqMMDALOSS}) w.r.t $W^{(c)}$ to zero, the above equation becomes
 \begin{equation}
 \begin{aligned}
\frac {\partial \mathcal{L}_{MMDA}}{W^{(c)} }= -2\widetilde{X^{(c)}}^{T}X^{(c)} +2 \lambda W^{(c)} + 2\widetilde{X^{(c)}}^{T}\widetilde{X^{(c)}}W^{(c)}+2\beta \widetilde{X^{(c)}}^{T}M_{0}^{(c)}X^{(c)})\widetilde{X^{(c)}}W^{(c)}=0
\end{aligned}
\label{MMDASOLUTION}
\end{equation}

\begin{equation}
 \begin{aligned}
W^{(c)} &= (\widetilde{X^{(c)}}^{T}\widetilde{X^{(c)}}+\lambda I_{d}^{(c)}+ \beta\widetilde{X^{(c)}}^{T}M_{0}^{(c)}\widetilde{X^{(c)}})^{-1}\widetilde{X^{(c)}}^{T}X^{(c)} \\&
=(\mathbb{E}[Q^{(c)}] +\lambda I_{d}^{(c)}+ \beta\widetilde{X^{(c)}}^{T}M_{0}^{(c)}\widetilde{X^{(c)}})^{-1}\mathbb{E}[P^{(c)}]\\&
=(\mathbb{E}[Q^{(c)}] +\lambda I_{d}^{(c)}+ \beta \mathbb{E}[Q_{2}^{(c)}])^{-1}\mathbb{E}[P^{(c)}]
\end{aligned}
\label{ProveeqWc}
\end{equation}

where $M_{0}^{(c)} = M_{c}$, $I_{d}^{(c)}$ is a diagonal matrix, $\mathbb{E}[Q^{(c)}] = \widetilde{X^{(c)}}^{T}\widetilde{X^{(c)}}$, $\mathbb{E}[Q_{2}^{(c)}] = \widetilde{X^{(c)}}^{T}M_{0}^{(c)}\widetilde{X^{(c)}}$ and $\mathbb{E}[P^{(c)}] = \widetilde{X^{(c)}}^{T}X^{(c)}$.

\begin{equation}
\mathbb{E}[Q_{ij}^{(c)}] = \left\{
             \begin{array}{lcl}
             { U_{ij}^{(c)}(1-p)^{2}}  &, if \quad i\neq j \\
             {U_{ij}^{(c)}(1-p)}  &,if\quad  i = j
             \end{array}
        \right.
\label{eqQC}
\end{equation}

\begin{equation}
\mathbb{E}[{Q_2}_{ij}^{(c)}] = \left\{
             \begin{array}{lcl}
             { {U_{2}}_{ij}^{(c)}(1-p)^{2}}  &, if \quad i\neq j \\
             {{U_{2}}_{ij}^{(c)}(1-p)}  &,if\quad  i = j
             \end{array}
        \right.
\label{eqmQ2C}
\end{equation}

\begin{equation}
  \mathbb{E}[P^{(c)}] =(1-p)U^{(c)}
\label{eqPC}
\end{equation}
where $U^{(c)} = (X^{(c)})^{T}X^{(c)}$, $U_{2}^{(c)} = (X^{(c)})^{T}M_{0}^{(c)}X^{(c)}$. Then, we can obtain the local feature representations $tanh([X_{S}^{(1)}W^{(1)},X_{S}^{(2)}W^{(2)}, \ldots, X_{S}^{(|C|)}W^{(|C|)}])$ and $tanh([X_{T}^{(1)}W^{(1)},X_{T}^{(2)}W^{(2)}, \ldots, X_{T}^{(|C|)}W^{(|C|)}])$.

\par In summary, we present our new semi-supervised domain adaptation algorithm as  \textbf{Algorithm 1}. Following the same strategy adopted by other antoencoders, we use stacked  MDA$_{ad}$ and stacked MMDA to learn deep feature representations. MDA$_{ad}$ and MMDA learn the global and local feature representations layer by layer greedily. Moreover, MDA$_{ad}$ and MMDA  use the same strategy as Chen et al. \cite{17}, MDA$_{ad}$ and MMDA  do not need an end-to-end fine-tuning with the BP algorithm and  can be calculated in a closed form with less time cost. The global and local feature representations learned with stacked MDA$_{ad}$ and stacked MMDA respectively are combined to form dual feature representations. At last, we train a classifier on the source domain with the dual feature representations for classifying  unlabeled data in  target domain. The implementation source code of SSRLDA is available at https://github.com/Minminhfut/SSRLDACode.

\subsection{Generalization bound analysis}
We analyze the generalization error bound of the proposed SSRLDA on the target domain by  the theory of domain adaptation \cite{36,MansourMR09}. We denote that $y(x)$ is the true label function of instance $x$, and $f(x)$ is the label prediction function. $\ddot{X}_{S}$ and $\ddot{X}_{T}$ are the induced data  over the new feature space $\mathcal{Z}$. The expected errors of $f$ with respect to $X_{T}$ and $X_{S}$ are denoted as Eq.(\ref{eqexpectT}) and Eq.(\ref{eqexpectS}) respectively.
\begin{equation}
\epsilon_{T}{(f)}=\mathbb{E}_{x\sim \ddot{X}_{T}}[|y(x)-f(x)|]
\label{eqexpectT}
\end{equation}
\begin{equation}
\epsilon_{S}{(f)}=\mathbb{E}_{x\sim \ddot{X}_{S}}[|y(x)-f(x)|]
\label{eqexpectS}
\end{equation}

\begin{theorem}\label{thm1}
Let $\mathcal{R}$ represent a fixed representation function from $X$ to $\mathcal{Z}$, $\mathcal{H}$ represents the reproducing kernel Hilbert space (a hypothesis space) of $VC$-dimension $d$. If a random labeled instance of size $n_{s}$ is generated by utilizing $\mathcal{R}$ to an $X_{S}$-i.i.d. instance labeled according to $f$, then the expected error of $f$ in $X_{T}$ is bounded with the probability at least 1-$\sigma$ by
\begin{equation}
\epsilon_{T}{(f)} \leq 	\hat{\epsilon_{S}}{(f)} + \sqrt{\frac{4}{n_{s}}(dlog\frac{2en_{s}}{d})+log\frac{4}{\sigma}} + 8Bb MMD_{\mathcal{H}}^{2}(\ddot{X}_{S},\ddot{X}_{T}) + \Theta
\label{eqexpectTTwo}
\end{equation}
where $B$, $b >0$ is a constant, $e$ is the base of natural logarithm, $\hat{\epsilon_{S}}{(f)}$ is the empirical error of $f$ in $X_{S}$,  and $\Theta\ge {\rm inf}_{f\in\mathcal{H}}[\epsilon_{S}{(f)}+\epsilon_{T}{(f)}]$.
\end{theorem}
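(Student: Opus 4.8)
The plan is to derive the bound by composing three standard ingredients from the domain-adaptation theory of \cite{36,MansourMR09}: a VC-style uniform convergence bound that relates the true source error to the empirical source error, a distribution-divergence term that transfers a bound on source error to target error, and a ``non-adaptability'' residual that measures how well any single hypothesis can do on both domains at once. Since $\mathcal{R}$ is fixed, everything is carried out over the induced feature space $\mathcal{Z}$, where $\ddot{X}_S$ and $\ddot{X}_T$ play the role of the (induced) source and target distributions and $\mathcal{H}$ is the hypothesis class of VC-dimension $d$.

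First I would invoke the classical Vapnik–Chervonenkis bound: with probability at least $1-\sigma$ over the draw of the $n_s$ labeled source instances (mapped through $\mathcal{R}$), every $f\in\mathcal{H}$ satisfies $\epsilon_S(f)\le\hat{\epsilon_S}(f)+\sqrt{\tfrac{4}{n_s}\bigl(d\log\tfrac{2en_s}{d}+\log\tfrac{4}{\sigma}\bigr)}$, which accounts for the second term on the right-hand side of \eqref{eqexpectTTwo}. Second, I would apply the triangle-inequality decomposition for the $L^1$-type risk: for a fixed target labeling function and an arbitrary reference hypothesis $f^\ast$, $\epsilon_T(f)\le\epsilon_S(f)+\bigl|\epsilon_T(f)-\epsilon_S(f)\bigr|\le\epsilon_S(f)+\bigl(\epsilon_S(f,f^\ast)+\epsilon_T(f,f^\ast)\bigr)+\text{(divergence between }\ddot{X}_S\text{ and }\ddot{X}_T)$, then optimize over $f^\ast$ so that $\epsilon_S(f,f^\ast)+\epsilon_T(f,f^\ast)$ collapses to $\Theta\ge\inf_{f\in\mathcal{H}}[\epsilon_S(f)+\epsilon_T(f)]$. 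The remaining domain-discrepancy quantity is what I would replace by the MMD term.

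The key step — and the main obstacle — is bounding the genuine discrepancy term (the integral-probability-metric distance over $\mathcal{H}$, or the $\mathcal{H}\Delta\mathcal{H}$-divergence) by $8Bb\,MMD_{\mathcal{H}}^2(\ddot{X}_S,\ddot{X}_T)$. The plan here is to use the RKHS structure of $\mathcal{H}$: for $f$ in a norm ball of radius $B$ in the RKHS with a kernel bounded by $b$, the difference $|\mathbb{E}_{\ddot{X}_S}f-\mathbb{E}_{\ddot{X}_T}f|$ is controlled by $B$ times the RKHS-norm of the mean-embedding difference, i.e. by $B\cdot MMD_{\mathcal{H}}(\ddot{X}_S,\ddot{X}_T)$; handling the loss (an absolute-value composition of two such functions) and squaring introduces the extra factor $b$ and the numerical constant $8$. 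One has to be careful that the loss class is still Lipschitz-embeddable and that boundedness of the kernel lets the $MMD$ absorb the product-of-functions term — this is exactly where the constants $B$ and $b$ enter and where the bound is loosest.

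Finally I would assemble the pieces: chain the VC bound into $\epsilon_S(f)$, add the adaptability residual $\Theta$ and the MMD-controlled discrepancy, and collect all terms to obtain \eqref{eqexpectTTwo}. I expect the proof to be short modulo the RKHS/MMD estimate, and I would state explicitly that the bound is vacuous unless $\mathcal{R}$ (hence the learned feature map of SSRLDA) is chosen to make $MMD_{\mathcal{H}}^2(\ddot{X}_S,\ddot{X}_T)$ small — which is precisely the quantity that MDA$_{ad}$ and MMDA are designed to minimize, thereby motivating the algorithm.
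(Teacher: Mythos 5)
Your proposal follows essentially the same route as the paper's proof: the same $f^{*}$-based decomposition of $\epsilon_{T}(f)$ into the source error (then bounded empirically via the Vapnik--Chervonenkis inequality), the adaptability residual $\Theta$, and a domain-discrepancy term $d_{\mathcal{H}}(\ddot{X}_{S},\ddot{X}_{T})$, which is finally controlled by $8Bb\,MMD_{\mathcal{H}}^{2}(\ddot{X}_{S},\ddot{X}_{T})$. The only difference is that where you sketch an RKHS mean-embedding derivation of that last estimate, the paper simply invokes it as the Domain Scatter Bounds Discrepancy lemma of Ghifary et al.\ \cite{GhifaryBKZ17}, so your extra derivation is optional detail rather than a different approach.
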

\begin{proof}
Let  $f^{*} = argmin_{f\in\mathcal{H}}(\epsilon_{S}{(f)}+\epsilon_{T}{(f)})$, and $\Theta_{T}$  be the error of $f^{*}$ with respect to $X_{T}$,   $\Theta_{S }$ be the errors of $f^{*}$ with respect to $X_{S}$, where $\Theta = \Theta_{S } + \Theta_{T}$, $d_{\mathcal{H}}(\ddot{X}_{S},\ddot{X}_{T})=2 \sup\limits_{f\in\mathcal{H}}{|{\rm Pr}_{\ddot{X}_{S}}[f] - {\rm Pr}_{\ddot{X}_{T}}[f]|}$.
\begin{equation}
 \begin{aligned}
\epsilon_{T}{(f)}&\leq \Theta_{T} + {\rm Pr}_{X_{T}}[\mathcal{Z}_{f}\Delta \mathcal{Z}_{f^{*}}]\\
&\leq  \Theta_{T} + {\rm Pr}_{X_{S}}[\mathcal{Z}_{f}\Delta \mathcal{Z}_{f^{*}}] +|{\rm Pr}_{X_{S}}[\mathcal{Z}_{f}\Delta \mathcal{Z}_{f^{*}}]-{\rm Pr}_{X_{T}}[\mathcal{Z}_{f}\Delta \mathcal{Z}_{f^{*}}]|\\
&\leq \Theta_{T} + {\rm Pr}_{X_{S}}[\mathcal{Z}_{f}\Delta \mathcal{Z}_{f^{*}}] + d_{\mathcal{H}}(\ddot{X}_{S},\ddot{X}_{T})\\
&\leq \Theta_{T} + \Theta_{S } + \epsilon_{S}{(f)} + d_{\mathcal{H}}(\ddot{X}_{S},\ddot{X}_{T})\\
&\leq \Theta + \epsilon_{S}{(f)} + d_{\mathcal{H}}(\ddot{X}_{S},\ddot{X}_{T})
\label{Provetarget}
\end{aligned}
\end{equation}
According to the  Vapnik-Chervonenkis theory \cite{Vapnik1998}, the  true $\epsilon_{S}{(f)}$ is bounded by its empirical estimate $\hat{\epsilon_{S}}{(f)}$. If $X_{S}$ contains an $n_{s}$ .i.i.d. instances, then with probability exceeding 1-$\sigma$:
\begin{equation}
\epsilon_{S}{(f)}\leq \hat{\epsilon_{S}}{(f)} + \sqrt{\frac{4}{n_{s}}(dlog\frac{2en_{s}}{d})+log\frac{4}{\sigma}}
\end{equation}
As it can be seen from Eq.(\ref{Provetarget}), the bound relies on the quantity $d_{\mathcal{H}}(\ddot{X}_{S},\ddot{X}_{T})$. One of the main techniques of SSRLDA is that SSRLDA utilizes  MMD distance to bound the discepancy distance $d_{\mathcal{H}}(\ddot{X}_{S},\ddot{X}_{T})$, according to Lemma 1, which is given by Ghifary et al. \cite{GhifaryBKZ17}.
\begin{lemma}[Domain Scatter Bounds Discrepancy \cite{GhifaryBKZ17}]
Let $\mathcal{H}$ be an RKHS with a universal kernel. Consider the hypothesis set
\begin{equation}
 \begin{aligned}
 Hyp = \{f \in \mathcal{H},\quad \|f||_{\mathcal{H}}\leq B \quad and \quad \|f\|_{\infty}\leq b\}
\label{hv}
\end{aligned}
\end{equation}
\end{lemma}
where $B$, $b >0$ is a constant. Let $\ddot{X}_{S}$ and $\ddot{X}_{T}$ be two distributions over $\mathcal{Z}$. Then the following inequality holds:
\begin{equation}
d_{\mathcal{H}}(\ddot{X}_{S},\ddot{X}_{T}) \leq 8Bb MMD_{\mathcal{H}}^{2}(\ddot{X}_{S},\ddot{X}_{T})
\end{equation}
The proof of Lemma 1 can be found in Ghifary et al. \cite{GhifaryBKZ17}.
\end{proof}

In SSRLDA framework, for MDA$_{ad}$, the distribution divergence between source and target domains can be measured by the marginal and condition distribution, and $MMD_{\mathcal{H}}^{2}(\ddot{X}_{S},\ddot{X}_{T}) = MMD^{2}_{mar}(\mathcal{P}_{\ddot{X}_{S}},\mathcal{P}_{\ddot{X}_{T}})+MMD^{2}_{con}(\mathcal{Q}_{\ddot{X}_{S}},\mathcal{Q}_{\ddot{X}_{T}})$. The value of $MMD_{\mathcal{H}}^{2}(\ddot{X}_{S},\ddot{X}_{T})$ is explicitly minimized by distribution adaptation in Eq.(\ref{ProveeqmDAad}). As for MMDA, owning to all instances in  a local subset belonging to the same category, hence only the marginal distribution between two domains is considered, namely $MMD_{\mathcal{H}}^{2}(\ddot{X}_{S},\ddot{X}_{T}) =  \sum_{c=1}^{|C|}MMD^{2}_{mar}(\mathcal{P}_{\ddot{X}_{S}^{(c)}},\mathcal{P}_{\ddot{X}_{T}^{(c)}})$. The value of $MMD_{\mathcal{H}}^{2}(\ddot{X}_{S},\ddot{X}_{T})$ is explicitly minimized  in Eq.(\ref{ProveeqMMDALOSS}). Thus, Theorem 1 indicates the learned feature representations are helpful for obtaining less classify error.

\begin{algorithm}[htb]
\caption{ SSRLDA}
\label{alg:Framwork}
\begin{algorithmic}[1] 
\REQUIRE ~~\\ 
labeled source domain $X_{S}=\{x_{i}^{s}\}_{i=1}^{n_{s}}$ with  label knowledge $Y_{S}=\{y_{i}^{s}\}_{i=1}^{n_{s}}$ ;\\ unlabeled target domain data $X_{T}=\{(x_{j}^{t})\}_{i=1}^{n_{t}}$;\\
number of the layer $l$, noise probability $p$,
hyper-paremeter $\beta$, $\gamma$;
\ENSURE ~~\\ 
A classifier $f(\cdot)$ for target domain data ;
\STATE Obtain the target pseudo labels with SVM;
\STATE for $z = 1 : l$ do
\STATE  \quad\quad    Divide the source and target data into different subsets $[X_{S}^{(c)},X_{T}^{(c)}]$ $(c=1,\ldots,|C|)$ \\\quad\quad by using the source labels and target pseudo labels;
\STATE \quad\quad \textbf{MDA$_{ad}$ stage:}
\STATE \quad\quad Initialize $H{1} = []$, $X_{0}=[X_{S},X_{T}]$;
\STATE  \quad\quad    Construct MMD matrix $M_{c}$  $(c=0,\ldots,|C|)$ with Eq.(\ref{eqm0}) and Eq.(\ref{eqmc});
\STATE  \quad\quad    for $k = 1 : z $ do
\STATE  \quad\quad  \quad\quad    Obtain $\widetilde{X_{k-1}}$ by disturbing $X_{k-1}$ with noise probability $p$;
\STATE \quad \quad   \quad\quad   Compute the mapping matrix $W$ with Eq.(\ref{ProveeqARmDAad4});
\STATE  \quad \quad  \quad\quad   Obtain the global feature presentations $X_{k} = [X_{S}(k);X_{T}(k)]=tanh(\widetilde{X_{k-1}}W)$;\\
\STATE  \quad \quad  \quad\quad   Define $H_{1} = [H_{1},X_{k}]$;\\
\STATE \quad \quad end
\STATE \quad\quad \textbf{MMDA stage:}
\STATE \quad\quad Initialize $H_{2} = []$, $X_{0}^{(c)}=[X_{S}^{(c)},X_{T}^{(c)}]$ $(c=1,\ldots,|C|)$;
\STATE  \quad\quad    Construct MMD matrix $M_{0}^{(c)}$  $(c=1,\ldots,|C|)$ with  Eq.(\ref{eqmc});
\STATE  \quad\quad    for $k = 1 : z $ do
\STATE  \quad\quad \quad\quad   Obtain $\widetilde{X_{k-1}^{(c)}}$ by disturbing $X_{k-1}^{(c)}$ with noise probability $p$;
\STATE \quad \quad \quad\quad   Compute the mapping matrix $W^{(c)}$ $(c=1,\ldots,|C|)$ with Eq.(\ref{ProveeqWc});
\STATE  \quad \quad \quad\quad  Obtain the local feature presentations $X_{k}^{(c)} =[X_{S}^{(c)}(k);X_{T}^{(c)}(k)] =tanh(\widetilde{X_{k-1}^{(c)}}W^{(c)})$;\\
\STATE  \quad \quad \quad\quad  Define $H_{2} = [H_{2},[X_{k}^{(1)}; \ldots; X_{k}^{(|C|)}]]$;\\
\STATE \quad \quad end
\STATE \quad \quad    Train a classifier $f(\cdot)$ on the source domain with the feature representations $[H_{1},H_{2}]$;
\STATE \quad \quad    Update the target pseudo labels with the classifier $f(\cdot)$;
\STATE  end
\RETURN classifier $f(\cdot)$; 
\end{algorithmic}
\end{algorithm}
\section{Experiments}
In this section, we perform a comprehensive experimental study on four public datasets to evaluate  the effectiveness  of the proposed SSRLDA model, including 20 Newsgroups  classification, Reuters, Spam and Office-Caltech10 object recognition datasets. Among of them, 20 Newsgroups, Reuters  and Spam datasets are general textual datasets.
\subsection{Data Sets}
\label{datasets}

\textbf{\emph{20 Newsgroups Dataset}}\footnote{http://qwone.com/~jason/20Newsgroups/}.  It contains 18774 news documents with 61188 features. 20 Newsgroups dataset is a hierarchical structure with 6 main categories and 20 subcategories. It contains four largest main categories, including Comp, Rec, Sci, and Talk. The largest subcategory is chosen as the source domain and the second largest subcategory is selected as the target domain for each main category. The task is to classify the main categories and we have adopted a setting as similar to \cite{33}.  In our experiments, the largest category comp is selected as the positive class and one of the three other categories is chosen as the negative class for each setting. The settings of this dataset are listed in Table \ref{20News}. We choose the 5000 most frequent terms as features.

\textbf{\emph{Reuters-21578} (Reuters)}\footnote{http://www.daviddlewis.com/resources/testcollections/}. Reuters is a general dataset for cross-domain text classification tasks.  It consists of three main categories Orgs, People  and Places, and three cross-domain tasks Orgs $\rightarrow$People, Orgs$\rightarrow$Places and People$\rightarrow$Places are constructed.

\textbf{\emph{Spam Dataset}}\footnote{ http://www.ecmlpkdd2006.org/challenge.html}. Spam dataset comes from the ECML/PKDD 2006 discovery challenge. It consists of 4000 labeled training instances, which are obtained from publicly available sources (Public), and half of them are non-spam and the other half are spam.  In addition, the testing instances were obtained from 3 different user inboxes U0, U1 and U2, every user inbox contains 2500 instances. Since the sources are different, the distributions of instances  from source domain and target domain are different. Thus, we construct three cross-domain tasks including Public$\rightarrow$U0, Public$\rightarrow$U1 and Public$\rightarrow$U2.

\textbf{\emph{Office-Caltech10  with SURF features}}.  Office-Caltech10  contains 10 object categories from an office environment in 4 image domains: Amazon (Am), Webcam (We), DSLR (DS), and Caltech256 (Ca) \cite{SunFS16}.  There are 8 to 151 samples per category per domain, and there are 2,533 images in total. Details of the Office-Caltech10  dataset are described in Table \ref{DOC}. In our experiments, due to DSLR containing fewer instances, we only use Amazon (Am), Webcam (We),  and Caltech256 (Ca) for cross-domain learning. In our experiments, six cross-domain tasks  Ca$\rightarrow$AM, Ca$\rightarrow$We, Am$\rightarrow$Ca, Am$\rightarrow$We, We$\rightarrow$Ca, We$\rightarrow$Am are constructed.

\tabcolsep 0.05in
\begin{table}
\centering
\caption{Description of data generated from 20 Newsgroups.}
\label{20News}       
\begin{tabular}{lp{3cm}p{4cm}}
\toprule
Setting   & Source Domain & Target Domain  \\
\midrule
\multirow{2}{*}{Comp$\rightarrow$Rec }  &  comp.windows.x   rec.sport.hockey  & comp.sys.ibm.pc.hardware  rec.motorcycles \\
\hline
\multirow{2}{*}{Comp$\rightarrow$Sci}   & comp.windows.x   \quad \quad \quad \quad \quad \quad sci.crypt & comp.sys.ibm.pc.hardware  sci.med \\
\hline
\multirow{2}{*}{Comp$\rightarrow$Talk}  & comp.windows.x  talk.politics.mideast & comp.sys.ibm.pc.hardware  talk.politics.guns  \\
\bottomrule
\end{tabular}
\end{table}

\begin{table}
\caption{Description of Office-Caltech10 dataset.}\label{DOC}
\centering
\begin{tabular}{lcccc}
\toprule
Dataset &Type & Instances  &Features & Classes\\\midrule   
Caltech-256   &Object &1123 &800 &10\\
AMAZON        &Object &958 &800 &10\\
Webcam        &Object &295 &800 &10\\
DSLR          &Object &157 &800 &10\\\bottomrule   
\end{tabular}
\end{table}

\subsection{Baseline Methods}
\label{comparedmethods}
We compare our approach with several state-of-the-art methods to evaluate the effectiveness of our algorithm.
\begin{itemize}
\item \textbf{SVM}\textbf{:} We train a linear SVM classifier on the raw feature space of the labeled source data, and use the classifier   to classify the unlabeled data in target domain.
\item \textbf{CORAL} \cite{SunFS16}\textbf{:} CORAL obtains transferable feature representations  by aligning the second-order statistics of distributions in source and target domains to reduce domain shift.
\item \textbf{DSFT} \cite{WeiKG17}\textbf{:} DSFT constructs a homogeneous feature space by learning the translations between domain specific features and domain common features.
\item \textbf{MDA-TR}\footnote{http://github.com/sclincha/xrce\_msda\_da\_regularization} \citep{20}\textbf{:} MDA-TR learns domain invariant feature representations with an appropriate regularization.
\item \textbf{mSDA}\footnote{http://www.cse.wustl.edu/$\sim$mchen} \citep{17}\textbf{:} mSDA extracts robust feature representations by  marginalization corrupting the raw input data.
\item \textbf{DNFC} \cite{WeiKG16}\textbf{: } DNFC incorporates kernelization and MMD into mSDA to obtain nonlinear deep features. Here we use SVM as the basic classifier.
\item \textbf{$\ell_{2,1}$-SRA} \citep{33}\textbf{:} $\ell_{2,1}$-SRA uses  $\ell_{2,1}$-norm to measure the reconstruction error to reduce the effect of outliers.
\item \textbf{WDGRL}\footnote{https://github.com/RockySJ/WDGRL} \cite{ShenQZY18}\textbf{:} WDGRL is an  adversarial method and it learns domain invariant feature representations by minimizing the wasserstein distance between source domain and target domain.
\end{itemize}

\textbf{Implementation Details:} In our experiments, we set hyper-parameters $\lambda$ = 10, $\beta$ = 0.1  for Spam and Reuters datasets, and set hyper-parameters $\lambda$ = 1E-5, $\beta$ =1000 for 20 Newsgroups dataset.  Layers $l$ is set to 4 and noise probability $p$ is set to 0.9 for 20 Newsgroups  and Spam datasets.  $\lambda$ is set to 1E-5, $\beta$ is set to 1, layer $l$ is set to 3 and noise probability $p$ is set to 0.6 for Office-Caltech10 dataset. For DSFT, we set $\alpha$ = $10^{5}$ and $\beta$ =1.  For MDA-TR, CORAL, DNFC and WDGRL, we use the default parameters as reported in \cite{20}, \cite{SunFS16}, \cite{WeiKG16} and \cite{ShenQZY18} respectively. In the method of mSDA, the best parameters will be shown in the experiment. For $\ell_{2,1}$-SRA, the number of layers is set to 3, 3 and 3 for Spam, 20 Newsgroups and Reuters respectively. The  parameter $\alpha$ is set to 20 for Spam, 5 for 20 Newsgroups and 20 for Reuters respectively. And the parameter $\lambda$ is set to 10, 5 and 30 for Spam, 20 Newsgroups and Reuters respectively. All experimental results are conducted on a PC with Intel(R) i7-7700T, 2.9 GHz CPU, and 16 GB memory.
\par We utilize the classification accuracy which is widely used in the literature \cite{33,20} as the evaluation metric.
\begin{equation*}
Accuracy = \frac{|x:x \in X_{T}\wedge f(x)=y(x)|}{|x:x \in X_{T}|}
\end{equation*}
where $y(x)$ is the true label of instance $x$, and $f(x)$ is the label predicted by the classification model.
\subsection{Experimental Results  of Our Algorithm}
\label{classaccuracy}

\tabcolsep 0.02in

\begin{table}
\caption{Performance (accuracy \%) on Textual datasets.}\label{accuracyTextual}
\centering
\begin{tabular}{lccccccccc}
\toprule
tasks & SVM & CORAL  &DSFT & MDA-TR & mSDA & DNFC &$\ell_{2,1}$-SRA &WDGRL &SSRLDA \\\midrule   
Comp$\rightarrow$Rec       &77.13 &70.03 &80.98 &80.32  &80.78 	&84.99  &82.61 	&\textbf{95.66} &93.05\\
Comp$\rightarrow$Sci       &74.77 &72.79 &74.42 &73.86 	&76.45 	&77.06  &79.35 	&83.52 &\textbf{92.52}\\
Comp$\rightarrow$Talk      &92.69 &90.15 &93.27 &94.01 	&93.27  &94.39 	&92.74 	&95.76 &\textbf{97.51}\\\hline
Orgs$\rightarrow$People    &67.88 &70.70 &77.24 &79.06	&77.65 	&79.30  &78.81 	&80.38 &\textbf{98.26}\\
Orgs$\rightarrow$Places    &64.33 &68.17 &69.13 & 71.62 &69.51 	&70.95  &73.15	&75.74 &\textbf{92.33}\\
People$\rightarrow$Places  &53.39 &56.55 &60.17 &60.54  &63.05 	&62.21  &65.83 	&63.70 &\textbf{86.17}\\\hline
Public$\rightarrow$U0      &72.79 &77.03 &73.83 &82.55 	&78.00  &73.76	&81.99  &85.67 &\textbf{91.27} \\
Public$\rightarrow$U1      &73.94 &76.30 &81.14 &85.87 	&85.12 	&67.84  &85.99  &88.26 &\textbf{91.19} \\
Public$\rightarrow$U2      &78.64 &70.60 &78.64 &85.92 	&90.44 	&89.92  &91.36 	&\textbf{95.76} &92.92\\\hline
               Avg         &72.84 &72.48 &77.09 &79.31 	&79.46 	&77.82  &81.31 	&84.94 &\textbf{92.80}\\\bottomrule   
\end{tabular}
\end{table}

\tabcolsep 0.038in
\begin{table}
\caption{Performance (accuracy \%) on Office-Caltech10 dataset.}\label{accuracyOfimage}
\centering
\begin{tabular}{lccccccccc}
\toprule
tasks& SVM & CORAL &DSFT  & MDA-TR & mSDA & DNFC & $\ell_{2,1}$-SRA &WDGRL &SSRLDA\\
\midrule   
Ca$\rightarrow$Am &51.98 	&50.21  &51.98 &44.15 &54.70 	&52.19  &54.80	 	&55.22 &\textbf{64.82}\\
Ca$\rightarrow$We &34.92 	&33.56 	&34.92 &41.69 &37.29 	&42.37  &37.29	 	&42.37 &\textbf{54.58}\\
Am$\rightarrow$Ca &41.85 	&43.01  &41.85 &37.40 &42.56 	&41.23  &42.83 	 	&45.86 &\textbf{57.79}\\
Am$\rightarrow$We &29.83 	&32.54 	&29.83 &32.88 &34.92 	&37.29  &37.97	 	&\textbf{40.68 }&38.64\\
We$\rightarrow$Ca &28.23 	&29.83  &28.23 &24.93 &31.34 	&31.97  &35.62 	 	&31.08 &\textbf{37.93}\\
We$\rightarrow$Am &30.79 	&30.06 	&30.79 &29.65 &34.66 	&35.39  &38.31	 	&32.15 &\textbf{43.01}\\\hline
Avg               &36.27 	&36.54 	&36.27 &35.12 &39.24    &40.07  &41.14 	    &41.23 &\textbf{49.46}\\
\bottomrule   
\end{tabular}
\end{table}

In this section, we compare our SSRLDA with SVM, CORAL, DSFT, MDA-TR, mSDA, DNFC, $\ell_{2,1}$-SRA and WDGRL on  four datasets. Experimental results of four  datasets are reported in Tables  \ref{accuracyTextual} \ref{accuracyOfimage}.  From experimental results, we have the following observations:
\begin{itemize}

\item SSRLDA vs. SVM. According to the values of average accuracy of four datasets, SSRLDA is significantly better than SVM. SVM does not leverage the source  knowledge for transfer learning and results in unsatisfying performance. It shows the effectiveness of our presented domain adaptation framework.
\item SSRLDA vs. CORAL.  SSRLDA outperforms CORAL, and  the classification accuracy improves a lot, especially in the tasks of  Reuters and Spam datasets.   CORAL only learns a  linear transformation to align the second-order statistics of the source and target distributions.  CORAL is a standard shallow learning method, while SSRLDA utilizes deep autoencoder models to learn better feature representations for domain adaptation.   It indicates  the superiority of SSRLDA which uses deep models to obtain complicated feature representations.
\item SSRLDA vs. DSFT.  It can be seen from Tables  \ref{accuracyTextual} \ref{accuracyOfimage}, DSFT  is inferior to SSRLDA.  DSFT is also a   shallow architecture. DSFT utilizes domain common features as a bridge to transfer knowledge for cross-domain learning, it can not well explore the potential relationships among features,  while SSRLDA utilizes deep autoencoder models to obtain  higher feature representations which can  reflect the potential relationship among features. This also demonstrates the effectiveness of SSRLDA.
\item SSRLDA vs. MDA-TR.   SSRLDA performs better than MDA-TR.  MDA-TR only uses single layer autoencoder to learn global feature representations, and it does not take  marginal distribution and condition distribution between source and target domains into account in learning feature representations. While SSRLDA stacks multi-layer autoencoders to obtain complicated feature representations, and  SSRLDA simultaneously minimizes   marginal distribution and condition distribution between two domains in learning feature representations. This shows the superiority of applying multi-layer model and minimizing the distribution discrepancy between two domains.
\item SSRLDA vs. mSDA and $\ell_{2,1}$-SRA.   SSRLDA is superior to mSDA and $\ell_{2,1}$-SRA. Though mSDA  and $\ell_{2,1}$-SRA stack multi-layer  autoencoders to learn global feature representations,  both of them do not consider  the distribution discrepancy between two domains   and do not fully utilize the label information from source domain. While  SSRLDA simultaneously minimizes   marginal distribution and condition distribution between source and target domains, and makes full use of label information in extracting feature representations. SSRLDA achieves the highest accuracy on four datasets. This  demonstrates the superiority of minimizing the distribution between two domains and  incorporating label information.

\item SSRLDA vs. DNFC.   SSRLDA outperforms DNFC.  DNFC minimizes the marginal distribution  between two domains, but it does not consider the condition distribution  in obtaining feature representations. This illustrates  the importance of minimizing the condition  distribution  between source domain and target domain.

\item SSRLDA vs. WDGRL.   WDGRL performs worse than SSRLDA.  WDGRL only learns global feature representations,  while SSRLDA learns both global and local feature representations. SSRLDA combines two types of autoencoders MDA$_{ad}$ and MMDA to learn deep feature representations that are beneficial  to obtain the latent relationships among features. This shows that learning the local feature representations is helpful to cross-domain tasks.

\end{itemize}

\par Overall, SSRLDA  outperforms  state-of-the-art baseline methods on four datasets. More specifically, the average classification accuracies of SSRLDA on Textual and  Office-Caltech10 datasets are 92.80\% and 49.46\% respectively. Compared with the best baseline algorithm WDGRL, the performance was improved by 7.86\% and  8.23\% respectively. Among  these deep learning algorithms, SSRLDA achieves a higher accuracy than MDA-TR,   mSDA, $\ell_{2,1}$-SRA, DNFC, and WDGRL, this indicates learning the local feature representations and incorporating  the label knowledge are able to boost the classification performance, which demonstrates the effectiveness of our proposed SSRLDA.

\begin{figure}
 \centering
\subfigure[20 Newsgroups and Reuters datasets]{
    \begin{minipage}[b]{0.35\textwidth}
    \includegraphics[width=1\textwidth]{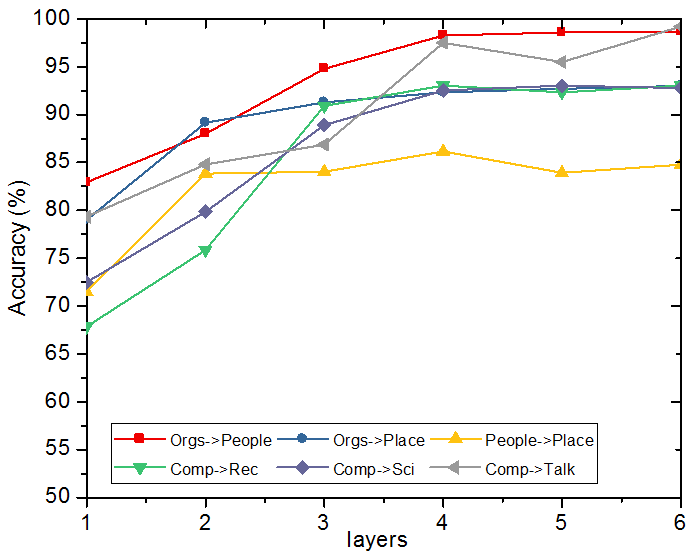}
    \end{minipage}
}
\subfigure[Spam dataset]{
    \begin{minipage}[b]{0.35\textwidth}
    \includegraphics[width=1\textwidth]{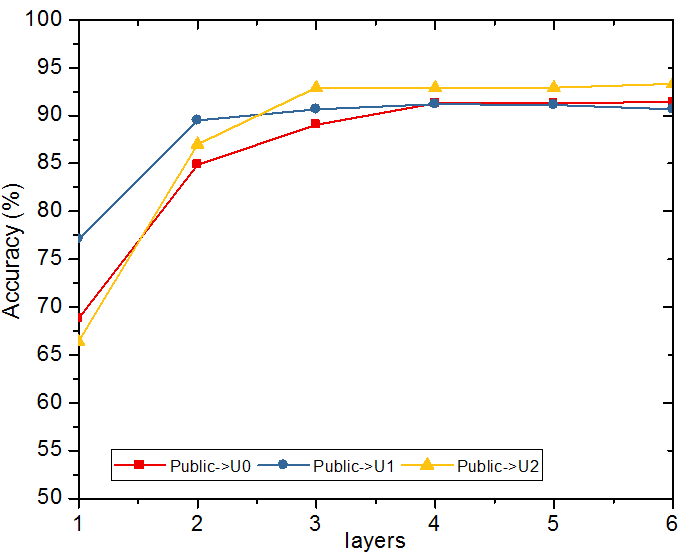}
    \end{minipage}
}
\subfigure[Office-Caltech10 dataset]{
    \begin{minipage}[b]{0.35\textwidth}
    \includegraphics[width=1\textwidth]{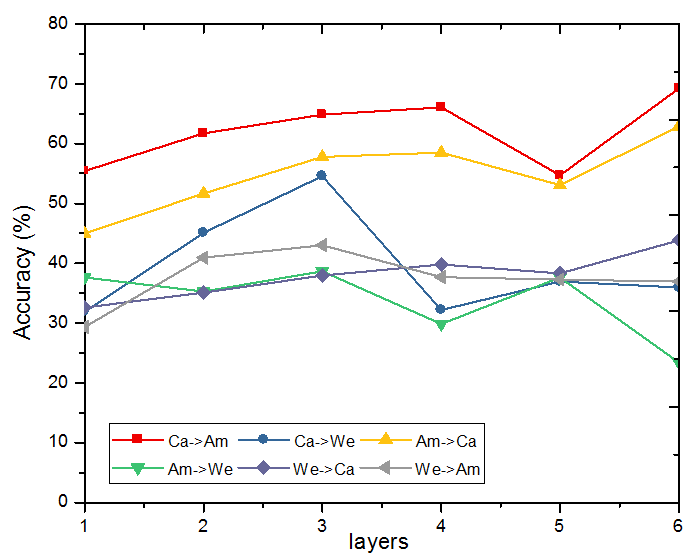}
    \end{minipage}
}
\caption{Accuracy (\%) with different number of layers  $l$ on different datasets.}
\label{parameterlayers}
\end{figure}

\begin{figure}
 \centering
\subfigure[20 Newsgroups and Reuters datasets]{
    \begin{minipage}[b]{0.35\textwidth}
    \includegraphics[width=1\textwidth]{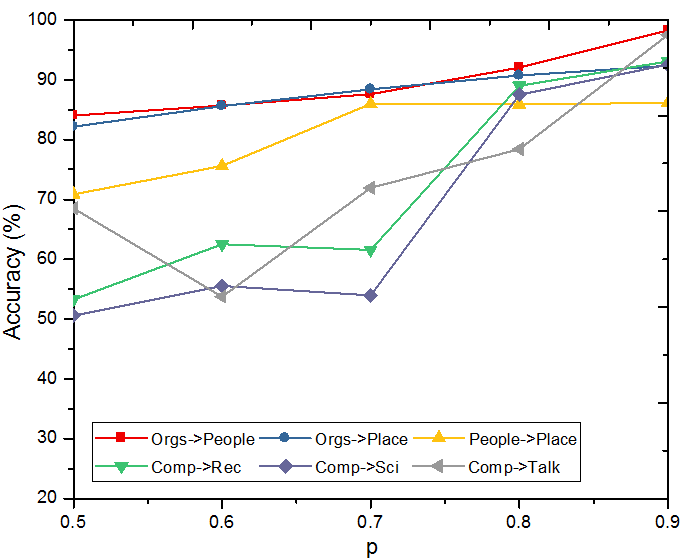}
    \end{minipage}
}
\subfigure[Spam dataset]{
    \begin{minipage}[b]{0.35\textwidth}
    \includegraphics[width=1\textwidth]{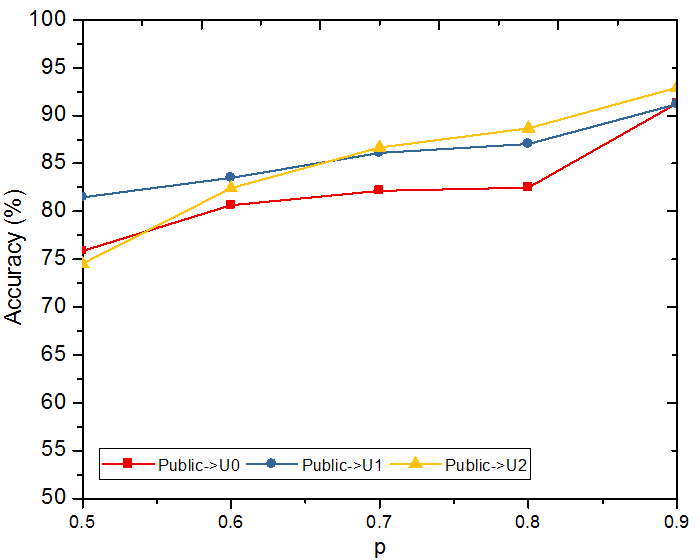}
    \end{minipage}
}
\subfigure[Office-Caltech10 dataset]{
    \begin{minipage}[b]{0.35\textwidth}
    \includegraphics[width=1\textwidth]{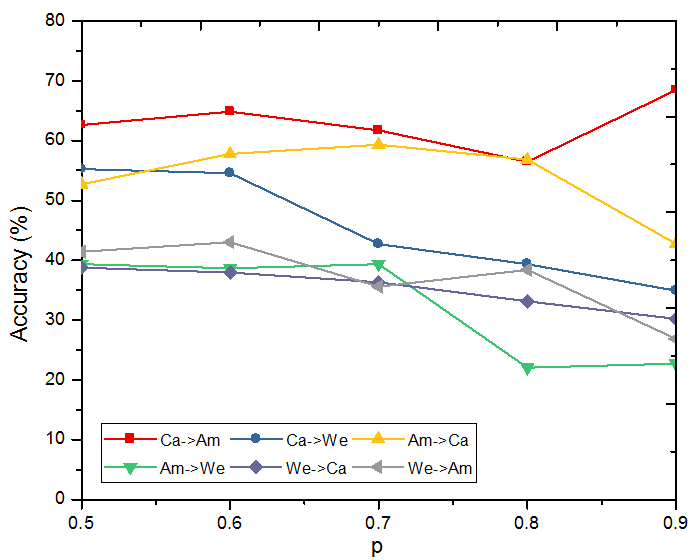}
    \end{minipage}
}
\caption{Accuracy (\%) with different values of $p$ on different datasets.}
\label{parametersp}
\end{figure}

\begin{figure}
 \centering
\subfigure[20 Newsgroups and Reuters datasets]{
    \begin{minipage}[b]{0.35\textwidth}
    \includegraphics[width=1\textwidth]{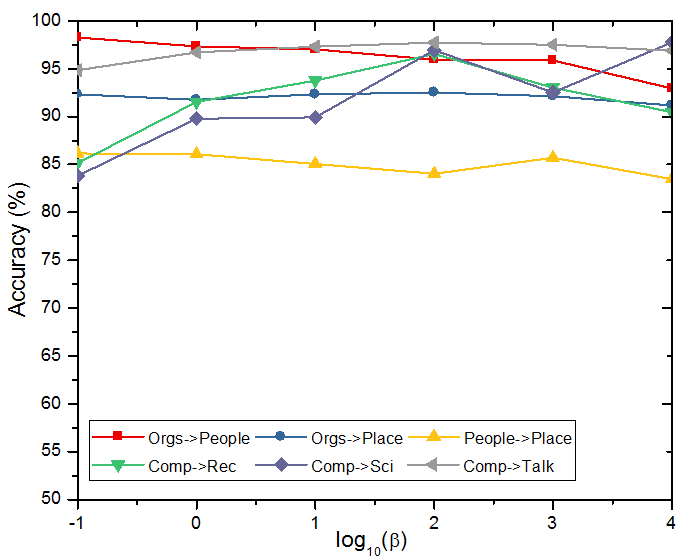}
    \end{minipage}
}
\subfigure[Spam dataset]{
    \begin{minipage}[b]{0.35\textwidth}
    \includegraphics[width=1\textwidth]{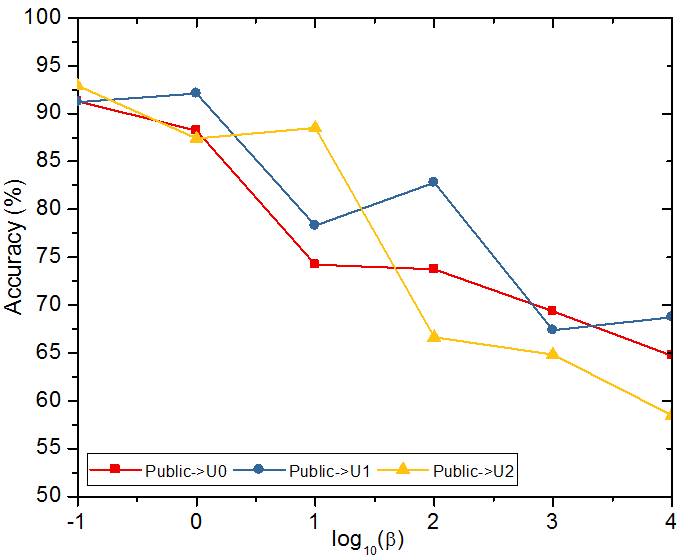}
    \end{minipage}
}
\subfigure[Office-Caltech10 dataset]{
    \begin{minipage}[b]{0.35\textwidth}
    \includegraphics[width=1\textwidth]{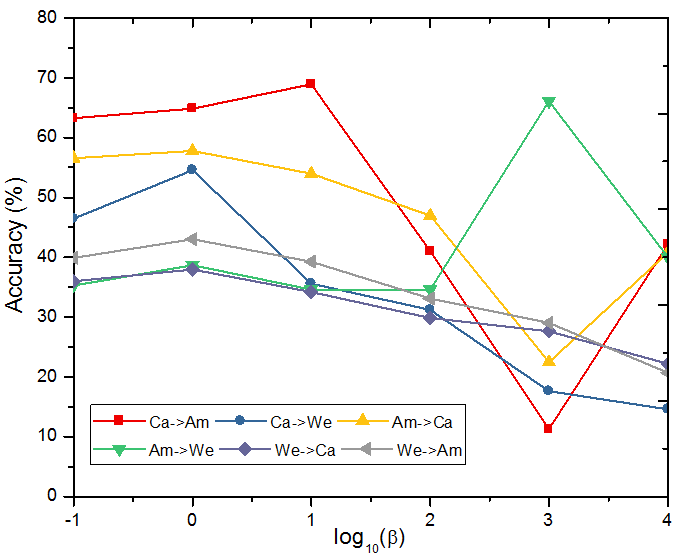}
    \end{minipage}
}

\caption{Accuracy (\%) with different values of $\beta$ on different datasets.}
\label{parametersbeta}
\end{figure}
\subsection{Parameter Sensitivity}
\label{parametersensitivity}
In this section, in order to validate that SSRLDA can achieve an optimal performance under wide range of parameter values, we conduct empirical analysis of  parameter sensitivity. There are four parameters in this approach: the number of layers $l$, noise probability $p$, hyper-parameters $\lambda$ and  $\beta$. We mainly discuss three parameters:  $l$, $p$, $\beta$. In this experiment, when one parameter is tuning, the values of other parameters are fixed.

\textbf{layers $l$:}  We run SSRLDA varying with values of layers $l$ which are  sampled from \{1,2,3,4,5,6\}, and all the results are reported in Fig.\ref{parameterlayers} (a)(b)(c).  On  20 Newsgroups, Reuters and Spam  datasets, the classification accuracy increases with the increasing of stacked layers.  We can achieve satisfying results when the number of stacked layers is set to 4. However, if stacked layers continue to increase, the training time increases a lot but  the classification  accuracy  improves a little. Additionally, we find that  Office-Caltech10 dataset can obtain satisfying results when the number of layers  is set to 3.  Therefore, the number of layers $l$ is set to 4 for 20 Newsgroups, Reuters and Spam datasets, 3 for Office-Caltech10 dataset respectively.

\textbf{noise probability $p$:} We plotted the accuracies with different  values of noise probability $p$ on the four datasets mentioned above, which are shown in Fig.\ref{parametersp} (a)(b)(c). From Fig.\ref{parametersp} (a)(b)(c), we find SSRLDA is sensitive to $p$ and the optimal values of $p$ on different datasets are different. From Fig.\ref{parametersp} (a)(b), we observe that the classification accuracy increases with the increasing of the value of $p$ on textual datasets, $p\in$ [0.8,0.9] is the optimal parameter value for textual  datasets. From Fig.\ref{parametersp} (c),  $p\in$ [0.5,0.6] is the optimal parameter value for Office-Caltech10  datasets.

\textbf{hyper-parameter $\beta$:} We run SSRLDA varying with values of parameter $\beta$ which are sampled from \{0.1,1,10,$10^{2}$,$10^{3}$,$10^{4}$\}, and all the results are shown in Fig.\ref{parametersbeta} (a)(b)(c). From Fig.\ref{parametersbeta} (a)(b)(c), we find that SSRLDA is also sensitive to  $\beta$. We observe that minimizing the marginal distributions between source and target domains  is beneficial to cross-domain learning. According to these observations, $\beta \in$ [$10^{2}$,$10^{3}$] is the optimal parameter value for 20 Newsgroups, while $\beta \in$ [0.1,1] is the optimal parameter value for  the other datasets.

\begin{figure}
 \centering
\subfigure[20 Newsgroups and Reuters datasets]{
    \begin{minipage}[b]{0.35\textwidth}
    \includegraphics[width=1\textwidth]{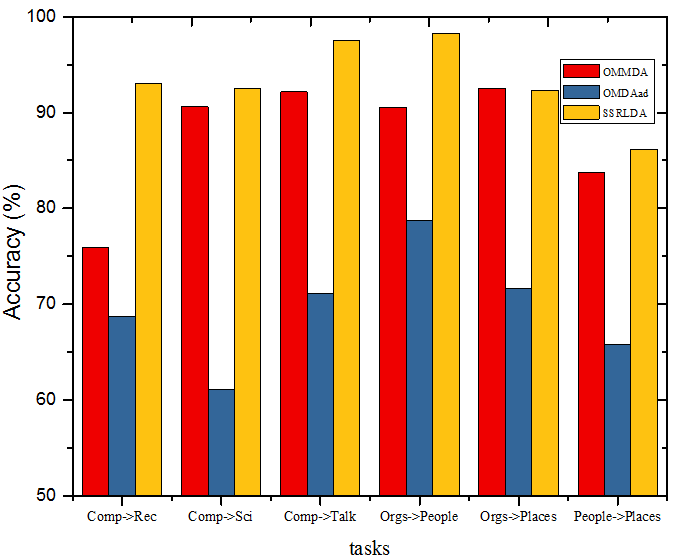}
    \end{minipage}
}
\subfigure[Spam dataset]{
    \begin{minipage}[b]{0.35\textwidth}
    \includegraphics[width=1\textwidth]{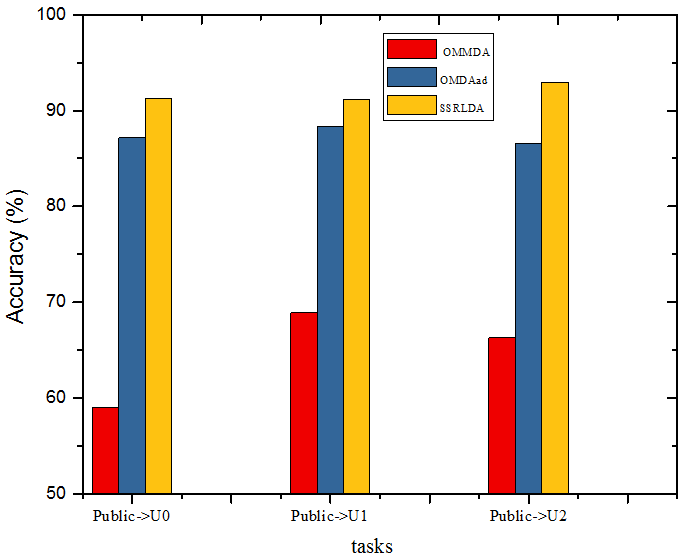}
    \end{minipage}
}
\subfigure[Office-Caltech10 dataset]{
    \begin{minipage}[b]{0.35\textwidth}
    \includegraphics[width=1\textwidth]{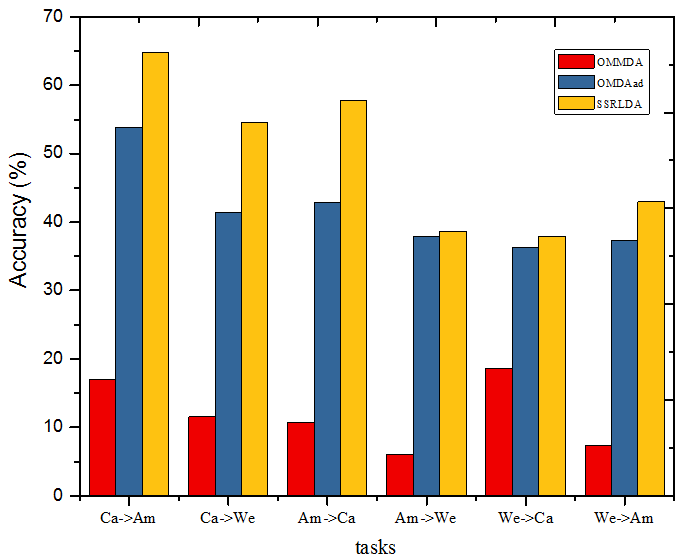}
    \end{minipage}
}
\caption{Comparison of OMDA$_{ad}$, OMMDA and SSRLDA  on different datasets.}
\label{globallocal}
\end{figure}
\begin{figure}
 \centering
\subfigure[20 Newsgroups dataset]{
    \begin{minipage}[b]{0.35\textwidth}
    \includegraphics[width=1\textwidth]{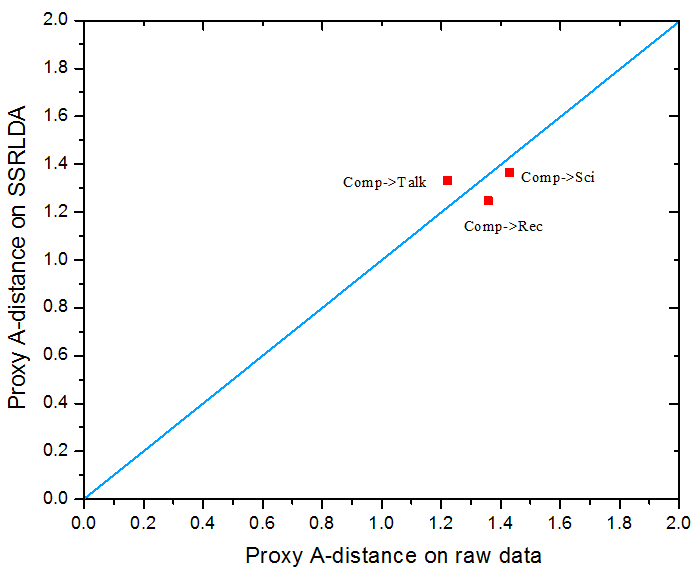}
    \end{minipage}
}
\subfigure[Reuters dataset]{
    \begin{minipage}[b]{0.35\textwidth}
    \includegraphics[width=1\textwidth]{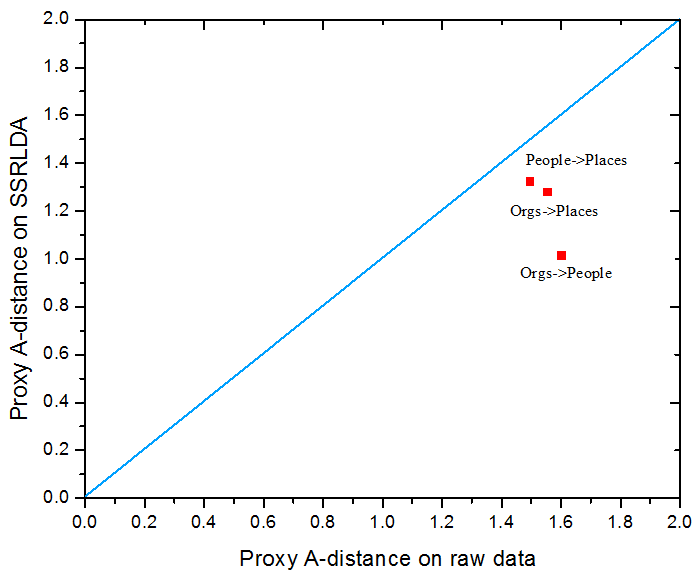}
    \end{minipage}
}
\subfigure[Spam dataset]{
    \begin{minipage}[b]{0.35\textwidth}
    \includegraphics[width=1\textwidth]{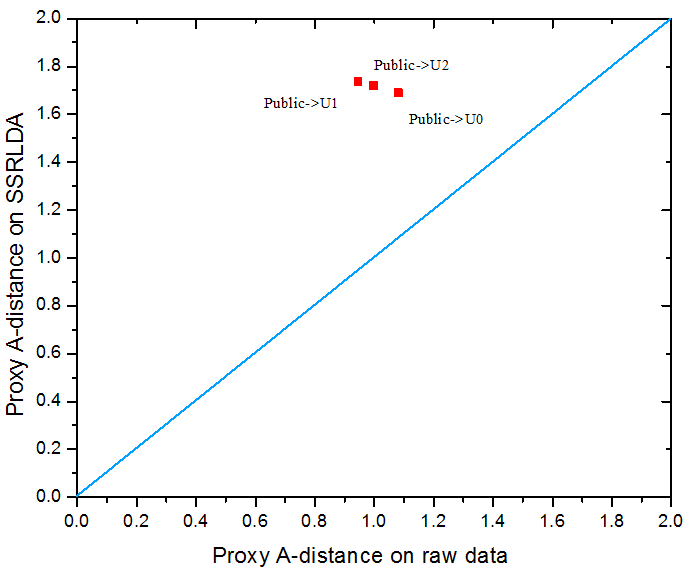}
    \end{minipage}
}
\subfigure[Office-Caltech10 dataset]{
    \begin{minipage}[b]{0.35\textwidth}
    \includegraphics[width=1\textwidth]{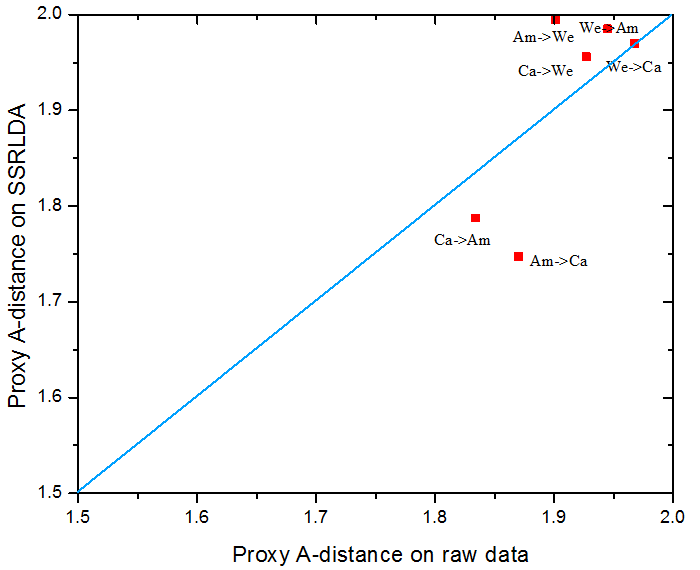}
    \end{minipage}
}
\caption{The Proxy-A-distance  on raw data and SSRLDA.}
\label{Proxy-A-distance}
\end{figure}
\begin{figure}
 \centering
\subfigure[20 Newsgroups dataset]{
    \begin{minipage}[b]{0.35\textwidth}
    \includegraphics[width=1\textwidth]{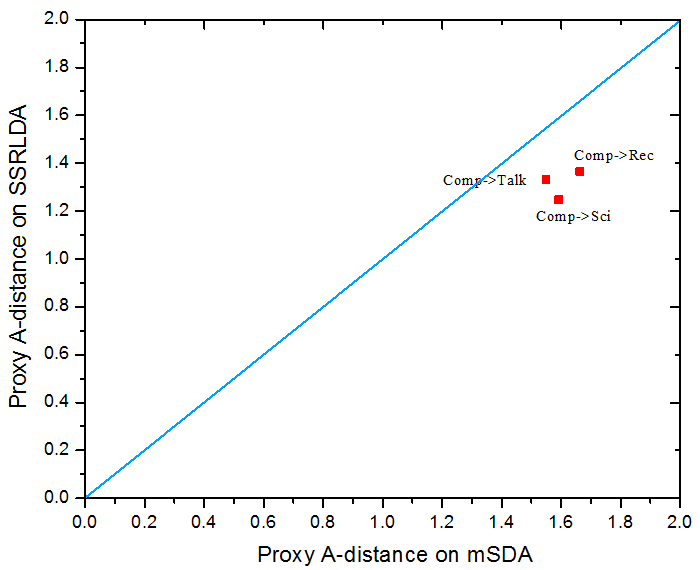}
    \end{minipage}
}
\subfigure[Reuters dataset]{
    \begin{minipage}[b]{0.35\textwidth}
    \includegraphics[width=1\textwidth]{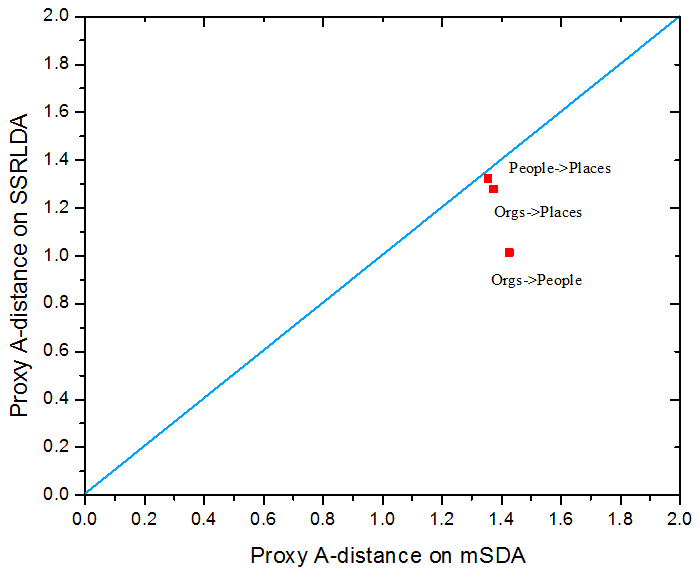}
    \end{minipage}
}
\subfigure[Spam dataset]{
    \begin{minipage}[b]{0.35\textwidth}
    \includegraphics[width=1\textwidth]{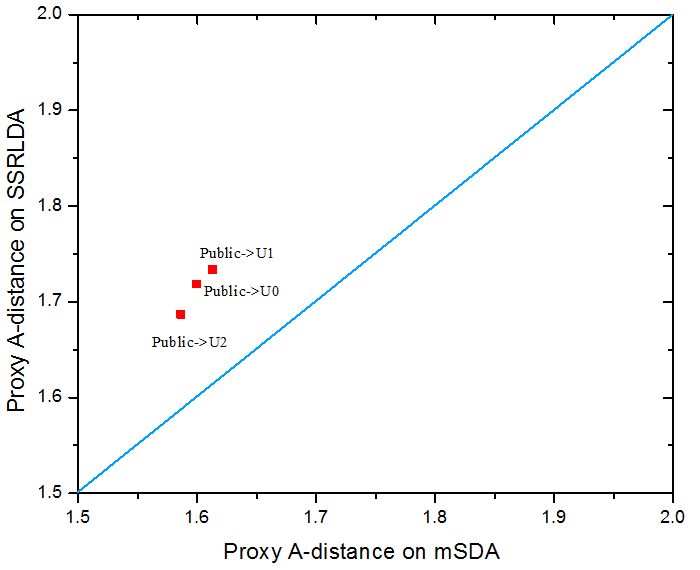}
    \end{minipage}
}
\subfigure[Office-Caltech10 dataset]{
    \begin{minipage}[b]{0.35\textwidth}
    \includegraphics[width=1\textwidth]{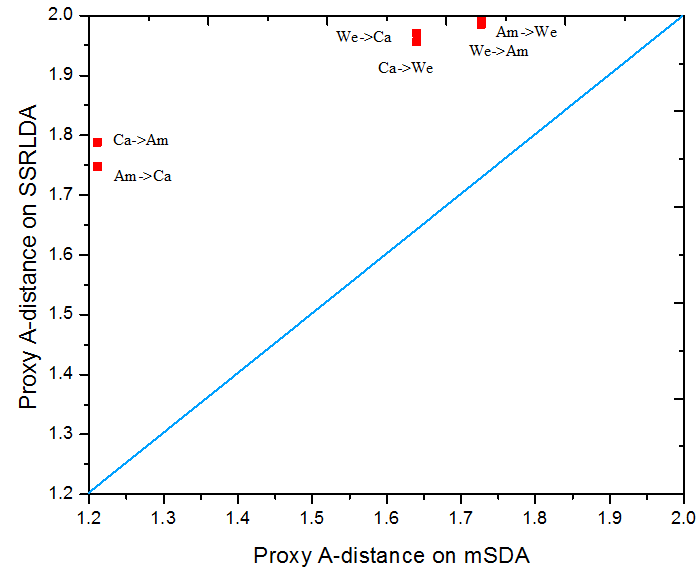}
    \end{minipage}
}
\caption{The Proxy-A-distance on mSDA and SSRLDA.}
\label{Proxy-A-distancemsda}
\end{figure}

\subsection{Effect of Combining  Different Kinds of Autoencoders}
In this section, in order to verify the effectiveness of learning dual feature representations for domain adaptation, we develop two variants of SSRLDA: the one is referred to as OMDA$_{ad}$ (only marginalized denoising autoencoder with  adaptation distributions is used) and the other is referred to as OMMDA (only multi-class marginalized denoising autoencoder is utilized). The proposed SSRLDA is compared with the OMDA$_{ad}$  and OMMDA on  four datasets, and the experimental results are reported in Fig.\ref{globallocal} (a)(b)(c). It can be seen from Fig.\ref{globallocal} (a)(b)(c): 1) we observe that the performance of OMDA$_{ad}$ is inferior to OMMDA on 20 Newsgroups and Reuters, while OMDA$_{ad}$  performs better than OMMDA on  Spam and Office-Caltech10 datasets. The reason  may be that 20 Newsgroups and Reuters are of hierarchical structures, OMMDA can learn  better feature representations of the same hierarchical structure, while  OMDA$_{ad}$  is skilled in learning complicated feature representations on datasets without a hierarchical structure. 2) Overall, SSRLDA is superior to  OMDA$_{ad}$  and OMMDA. It indicates that learning dual feature representations is beneficial to transfer learning rather than only capturing global or local feature representations.  In a word, we can  conclude that SSRLDA can learn more complicated and richer feature representations for cross-domain tasks.

\subsection{Transfer Distance}
A proxy-A-distance $d_{A}=2(1-2\epsilon$) \cite{36} is widely utilized to measure the similarity between two domains which follow different data distributions, where $\epsilon$ is the generalization error of a classifier.  In our experiment, we calculate $\epsilon$ with linear SVM trained on the binary classification problem to discriminate source and target domains.  References \cite{16,17} showed that the proxy-A-distance increases after representations learning, which indicates that the new representations  are helpful for cross-domain learning.  The proxy-A-distance before and after SSRLDA is applied as shown in Fig.\ref{Proxy-A-distance}.  We observe that the proxy-A-distance  increases on Spam dataset and  reduces on Reuters dataset,  while it reduces or increases on different tasks of 20 Newsgroups and Office-Caltech10 datasets, which can be seen from Fig.\ref{Proxy-A-distance} (a)(b)(c)(d). Additionally, due to space limit, we only compare the proxy-A-distance between  SSRLDA and mSDA. From Fig.\ref{Proxy-A-distancemsda} (a)(b)(c)(d),  compared with mSDA, we  find that the proxy-A-distance on SSRLDA increases on Spam and Office-Caltech10 datasets, while it reduces on 20 Newsgroups and  Reuters datasets.  Because SSRLDA achieves satisfying results on the four datasets,  we can obtain the same result as the work in \cite{33}, the proxy-A-distance might become smaller or bigger with new feature representations.

\section{Conclusion and Future Studies}
\label{conclusion}
A novel semi-supervised representation learning approach (SSRLDA) has been proposed for domain adaptation  in this paper. Contrary to existing domain adaptation approaches, our presented approach SSRLDA is the first one to learn global and local feature representations  between source and target domains simultaneously. Additionally, SSRLDA adopts a new strategy to incorporate the label information from source domain to optimize the global and local feature representations. More specifically, SSRLDA  learns better global feature representations by  minimizing  the marginal and condition distributions between source and target domains simultaneously. Furthermore, SSRLDA makes full use of label information to  capture the local information of source and target domains by the means of learning feature representations of instances with the same category in two domains. Our theoretical analysis indicates that SSRLDA guarantees the generalization error bound in target domain.  Finally, extensive experimental results  on textual and image datasets have revealed the effectiveness of   our presented approach.

\par This study  points out that learning local feature representations of instances with the same category and learning global feature representations of two domains are beneficial to  transfer learning. However, SSRLDA only aligns global and local feature representations with an equal importance, while it does not  evaluate the different importance of these two  types of feature representations in cross-domain tasks. In addition, SSRLDA uses the pseudo label knowledge of target domain to minimize the discrepancy across domain, thus the classification performance is susceptible to  the uncertainty of the pseudo-labeling accuracy. In our future work, we will learn an adaptive factor to dynamically leverage the importance of  global and local feature representations, and  use the majority voting method to extract the pseudo label knowledge of target domain. In addition, we plan to improve our work by extending the learning framework of dual feature representations learning framework to that of multiple feature representations.

\section*{Acknowledgments}
This work is supported in part by the National Key Research and Development Program of China under grant 2016YFB1000901, the Natural Science Foundation of China under grants (61673152,91746209,61876206,61976077) and the Key Laboratory of Data Science and Intelligence Application, Fujian Province University (NO. 1902), and the Anhui Province Key Research and Development Plan (No. 201904a05020073).

\section*{REFERENCES}
\bibliographystyle{elsarticle-num}
\bibliography{refs}
\end{sloppypar}
\end{document}